\newtheorem{thm}{Theorem}
\newtheorem{lem}[thm]{Lemma}
\newtheorem{defn}{Definition}
\newtheorem{aspt}{Assumption}
\begin{document}
\title{Policy Optimization for Continuous Reinforcement Learning}
\author{Hanyang Zhao\thanks{Department of Industrial
Engineering and Operations Research, Columbia University, New York, New York 10027, USA, \texttt{hz2684@columbia.edu}}\ \ \ Wenpin Tang\thanks{Department of Industrial Engineering and Operations Research, Columbia University, New York, New York 10027, USA, \texttt{wt2319@columbia.edu}}\ \ \  David D. Yao \thanks{Department of Industrial Engineering and Operations Research, Columbia University, New York, New York 10027, USA, \texttt{yao@columbia.edu}}}
\maketitle

\begin{abstract}
We study reinforcement learning (RL) in the setting of continuous time and space, for an infinite horizon with a discounted objective and 
the underlying dynamics driven by a stochastic differential equation.
Built upon recent advances in the continuous approach to RL, we develop a notion of occupation time (specifically for a discounted objective), 
and show how it can be effectively used to derive performance-difference and local-approximation formulas. 
We further extend these results to illustrate their applications in the PG (policy gradient) and TRPO/PPO 
(trust region policy optimization/ proximal policy optimization) methods,  
which have been familiar and powerful tools in the discrete RL setting but under-developed in continuous RL.   
Through numerical experiments, we demonstrate the effectiveness and advantages of our approach. 
\bigskip

\noindent {\bf Key Words.} exploratory stochastic control, occupation time, performance difference, policy optimization
\end{abstract}

\section{Introduction}
Reinforcement Learning (RL, \cite{sutton2018reinforcement}) has been successfully applied to wide-ranging domains in the past decade, including achieving superhuman performance in games like Atari and Go \cite{mnih2013playing, silver2016mastering, silver2017mastering}, 
enhancing Large Language Models using human feedback \cite{bubeck2023sparks, christiano2017deep}, and showing potentials in improving traditional model-based decisions in healthcare, inventory management, and finance \cite{buehler2019deep, ling2017diagnostic, madeka2022deep}. Most existing works, including all references cited above, are formulated and solved as {discrete-time} sequential optimization problems such as {Markov decision processes (MDPs, \cite{puterman2014markov})}. Yet in many applications, agents may need to monitor and interact with the random environment at an ultra-high frequency (e.g., autonomous driving, robot navigation, and high-frequency stock trading), which calls for a \textit{continuous-time/space} approach.
\vskip 7 pt
Recent years have witnessed a fast growing body of research that has extended the frontiers of continuous RL in several important directions including, for instance, modeling the noise or randomness in the environment dynamics as following a stochastic differential equation (SDE), and incorporating an entropy-based regularizer into the objective function \cite{wang2020reinforcement} to facilitate the exploration-exploitation tradeoff; designing model-free methods and algorithms, along with applications to portfolio optimization 
\cite{huang2022achieving, jia2022policy_evaluation, jia2022policy_gradient, jia2022q_learning}; studying regret bounds \cite{basei2022logarithmic, STZ21}, and so forth.
\vskip 7 pt

In this paper, we continue the above trend in continuous RL, focusing on an infinite horizon formulation with a discounted objective and 
the underlying dynamics driven by an SDE \cite{KS91,oksendal2003stochastic}. We are specifically motivated by the following two questions.
\vskip 7 pt
\noindent (\textbf{Q1}) The visitation frequency in MDP (with a discounted objective) is defined as: $\rho(s)=\sum_{t=0}^{\infty}\gamma^{t}\cdot\mathbb{P}(Y_t=s)$,
where $\{Y_t\}$ is a Markov chain with state space $\mathcal{S} :=\{s\}$, and $\gamma\in (0,1)$ is a discount factor. It plays an important role in many RL algorithms for MDP.
So, a natural question is, what is the continuous counterpart of $\rho(s)$?
\vskip 7pt
\noindent (\textbf{Q2}) For continuous RL, how can we characterize the difference in performance between two policies? In particular, can we derive performance-difference formulas similar to those in the MDP case \cite{kakade2002approximately,schulman2015trust}? Can we adapt and apply the ideas and tools of the efficient policy optimization methods (e.g., \cite{schulman2015trust,schulman2017proximal}) to the continuous RL setting?
\vskip 7pt
\noindent\textbf{Main contributions}. We provide a unified theory/framework for policy optimization in continuous time and space. Specifically, we have addressed the above two questions \textbf{(Q1)} and \textbf{(Q2)} by developing the notion of {\it occupation time/measure}, specifically for a discounted objective, and focusing on its associated {\em $q$-value}. 
Based on these two quantities, we derive the performance-difference formula for continuous RL {by means of} perturbation analysis. Leveraging the performance-difference formula, we develop the continuous counterparts of the policy gradient (PG, \cite{sutton1999policy}) and also propose the local approximation for the performance metric, for which we derive a bound on it and allow the development of a minorization-majorization (MM) algorithm. We further develop the continuous counterparts of trust region policy optimization/ proximal policy optimization (TRPO/PPO) methods in \cite{schulman2015trust,schulman2017proximal}, which have been familiar and powerful tools in the discrete RL setting but under-developed in continuous RL, as approximations to the previous algorithms. (What is worth mentioning is that these policy optimization algorithms do not require any {\it a priori} discretization of time and space.) Through numerical examples we show the convergence of these algorithms when applied to certain stochastic control tasks in continuous time and space.
\vskip 7pt
\noindent\textbf{Related works}.
One line of research on continuous RL focuses on modeling the underlying dynamics as a deterministic system, typically following a deterministic ordinary differential equation. 
Several papers \cite{buehler2019deep, munos2006policy, sallab2017deep} solve the problems via {\it a priori} discretization in either time or space; 
\cite{doya2000reinforcement} develops a framework to apply the temporal difference to the continuous setting, and proposes algorithms that combine value iteration or advantage update as in \cite{baird1994reinforcement,baird1993advantage,bradtke1992reinforcement} to avoid explicit discretization; 
\cite{munos2006policy} further investigates policy gradient methods, followed by more recent studies on model-free continuous RL methods \cite{kim2021hamilton, LEE2021109421,vamvoudakis2017q} or model-based ones \cite{NEURIPS2020_e562cd9c}; 
\cite{tallec2019making} studies the sensitivity of existing off-policy algorithms along with advantage updating to propose continuous RL algorithms that are robust to time discretization.
\vskip 7 pt
The formulation of continuous RL in a stochastic setting (i.e., with the state process driven by an SDE), can be traced back to \cite{NIPS1997_186a157b}, which however provides no data-driven solution. 
Recently, \cite{wang2020reinforcement} develops an exploratory control model for the continuous RL.
Built upon this approach and for a finite-horizon objective, \cite{jia2022policy_evaluation} studies policy evaluation, 
and \cite{jia2022policy_gradient} policy gradient.
Furthermore,  \cite{jia2022q_learning} brings forth the notion of $q$-value,
which leads to a continuous analogue of $Q$-learning. 
Also worth noting is \cite{AL20, LK22}, which studies RL in the mean-field regime where continuous-time processes occur in the limit, and \cite{guo2021reinforcement} extends the study to jump-diffusion processes.
\vskip 7 pt
In discrete-time MDPs, the body of research on bounding the performance difference 
between two policies also relates to our work: 
\cite{achiam2017constrained, schulman2015trust} develop a policy improvement bound for the discounted total reward; 
\cite{dai2022queueing, zhang2021policy} studies the long-run average reward,
and \cite{dai2021refined} proposes a bound that is continuous with respect to the discount factor. 
\vskip 7pt

\noindent\textbf{Organization of the paper}. 
In Section 2 we present the continuous RL formulation and develop necessary tools.
The main results, the performance-difference formula (Theorem \ref{thm:PDF_cont}) and the bound (Theorem \ref{performancediffbound}) are provided in Section \ref{sc3}.
In Section \ref{sc4} we propose two algorithms, {\em policy gradient with random rollout} and {\em PPO with adaptive penalty},
based on our analyses and theoretical results; and illustrate their performance via 
numerical experiments.
Concluding remarks are summarized in Section \ref{sc5}.

\vskip 7pt
\noindent\textbf{Notation}.
For a measurable set $\mathcal{A}$, denote $\mathcal{P}(\mathcal{A})$ for the set of probability distributions over $\mathcal{A}$.
For a vector $x$, denote by $||x||_2$ the Euclidean norm of $x$.
For a matrix $A$, denote by $||A||_F$ the Frobenius norm of $A$,
and $A^2:=A A^{\top}$ where $A^{\top}$ is the transpose of $A$.
For a positive-definite matrix $A$, 
denote by $A^{\frac{1}{2}}$ the square root matrix of $A$. 
For $A,B$ two matrices of the same size, denote by $A \circ B$ the inner product of $A$ and $B$.
For a function $f$ on an Euclidean space, $\nabla f$ (resp.\ $\nabla^2 f$) denotes the gradient (resp.\ the Hessian) of $f$.
For two distributions $P, Q \in \mathcal{P}(A)$, denote by $W_2(P, Q)$ the Wasserstein-2 distance (or Quadratic Wasserstein distance) between $P$ and $Q$:
$$
W_2(P, Q)=\left(\inf _{\gamma \in \Gamma(P, Q)} \mathbf{E}_{(x, y) \sim \gamma} \|x-y\|^2\right)^{1 /2},
$$
where $\Gamma(P, Q)$ is the set of all couplings of $P$ and $Q$; and denote by $D_{\mathrm{KL}}(P||Q)$ the KL-divergence between $P$ and $Q$: $D_{\mathrm{KL}}(P \| Q)=\int p(x) \log \left(\frac{p(x)}{q(x)}\right) d x$, in which $p$ and $q$ denote the probability densities of $P$ and $Q$.

\section{Formulation and Preliminaries}
\label{sc2}

\textbf{Continuous RL}. We start with a quick formulation of the continuous RL, based on the same modeling framework as in \cite{wang2020reinforcement}.
Assume that the state space is $\mathbb{R}^n$, and denote by $\mathcal{A}$ the action space. 
Let $\pi(\cdot \mid x) \in \mathcal{P}(\mathcal{A})$ be a {(state) feedback} policy given the state $x \in \mathbb{R}^n$. 
A continuous RL problem is formulated by a distributional (or relaxed) control approach \cite{yong1999stochastic}, 
which is motivated by the trial and error process in RL. 
The state dynamics $(X^a_s, \, s \ge 0)$ is governed by the 
It\^o process:
\begin{equation}
\label{SDE_dynamic}
\mathrm{d} X_s^a=b\left(X_s^a, a_s\right) \mathrm{d} s+\sigma\left(X_s^a, a_s\right) \mathrm{d} B_s,\quad X^a_0\sim\mu\in\mathcal{P}(\mathbb{R}^n),
\end{equation}
where $(B_t, \, t \ge 0)$ is the $m$-dimensional Brownian motion,
$b:\mathbb{R}^n \times \mathcal{A} \mapsto \mathbb{R}^n$,
$\sigma:\mathbb{R}^n \times \mathcal{A} \mapsto \mathbb{R}^{n \times m}$,
and the action $a_s$ is generated from the distribution $\pi\left(\cdot \mid X^a_s\right)$ by {\em external randomization}.
{To avoid technical difficulties, we assume that the stochastic processes \eqref{SDE_dynamic} (and \eqref{SDE_Dynamics_exp}, \eqref{perturbSDE} below) are well-defined, see \cite[Section 5.3]{KS91} or \cite[Chapter 6]{SV79} for background.
}

From now on, write $(X^{\pi}_s, a^{\pi}_s)$ for the state and action at time $s$ given by the process \eqref{SDE_dynamic} under the policy $\pi = \{\pi(\cdot \mid x) \in \mathcal{P}(\mathcal{A}): x \in \mathbb{R}^n\}$. 
The goal here is to find the optimal feedback policy {$\pi^*$ that maximizes the expected discounted reward over an infinite time horizon:}
\begin{equation}
\label{Discounted Objective 2}
{V^*: =} \max_{\pi} \mathbb{E}\left[\int_0^{+\infty} e^{-\beta s} \left[r\left(X_s^\pi, a_s^\pi\right)+\gamma p\left(X_s^\pi, a_s^\pi, \pi\left(\cdot \mid X_s^\pi\right)\right)\right] \mathrm{d} s\mid X_0^\pi\sim\mu\right],
\end{equation}
where $r:\mathbb{R}^n \times \mathcal{A} \mapsto \mathbb{R}^{+}$ is the running reward of {the current state and action $(X^\pi_s, a^\pi_s)$}; 
$p: \mathbb{R}^n \times \mathcal{A} \times \mathcal{P}(\mathcal{A}) \mapsto \mathbb{R}$ is a regularizer {which facilitates exploration} (e.g., in \cite{wang2020reinforcement}, $p$ is taken as the differential entropy defined by $p(x, a, \pi(\cdot))=-\log \pi(a)$);
$\gamma \geq 0$ is a weight parameter on exploration (also known as the ``temperature'' parameter);
and $\beta > 0$ is a discount factor that measures the {time-depreciation} of the objective value (or the impatience level of the agent).
\vskip 7pt
\noindent\textbf{Performance metric}. A standard approach to solving the problem in \eqref{Discounted Objective 2}
is to find a sequence of policies $\pi_k = \{\pi_k(\cdot \mid x): x \in \mathbb{R}^n\}$, $k = 1,2,\ldots$ such that
the value functions following the policies $\pi_k$ will converge to $V^*$,
or be at least increasing in $k$, i.e., demonstrating policy improvement.

Given a policy $\pi(\cdot)$, 
let $\tilde{b}(x, \pi(\cdot)):=\int_{\mathcal{A}} b(x, a) \pi(a) \mathrm{d} a$ and $\tilde{\sigma}(x, \pi(\cdot)):=\left(\int_{\mathcal{A}} \sigma^2(x, a) \pi(a) \mathrm{d} a\right)^{\frac{1}{2}}$.
{
Assume (for technical purpose) that 
$\tilde{\sigma}(x, \pi(\cdot))$ is positive definite for every $x \in \mathbb{R}^n$.
It is sometimes more convenient to consider the following equivalent SDE representation of \eqref{SDE_dynamic}:
}
\begin{equation}
\label{SDE_Dynamics_exp}
\mathrm{d} \tilde{X}_s = \tilde{b}\left(\tilde{X}_s, \pi(\cdot \mid  \tilde{X}_s)\right) \mathrm{d} s+\tilde{\sigma}\left(\tilde{X}_s, \pi(\cdot \mid \tilde{X}_s)\right) \mathrm{d} \tilde{B}_s, \quad \tilde{X}_0\sim\mu,
\end{equation}
in the sense that
there exists a probability measure $\tilde{\mathbb{P}}$ 
which supports the $m$-dimensional Brownian motion $(\tilde{B}_s, \, s \ge 0)$, 
and for each $s \geq 0$, the distribution of $\tilde{X}_s$ under $\tilde{\mathbb{P}}$ agrees with that of $X_s$ under $\mathbb{P}$ defined by \eqref{SDE_dynamic}, see Appendix \ref{Problem well-posedness}.
Note that the dynamics in \eqref{SDE_Dynamics_exp} does not require external randomization. We also set
$\tilde{r}(x,\pi):=\int_{\mathcal{A}} r(x, a) \pi(a) \mathrm{d} a$
and $\tilde{p}(x,\pi):=\int_{\mathcal{A}} p(x, a,\pi) \pi(a) \mathrm{d} a$.

We formally define the (state) value function {given the feedback policy $\{\pi(\cdot \mid x): x \in \mathbb{R}^n$\}} by
\begin{equation}
\label{Value function Definition}
\begin{aligned}
V( x ; \pi) &: = \mathbb{E} \left[\int_0^{+\infty} e^{-\beta s} \left[r\left(X_s^\pi, a_s^\pi\right)+\gamma p\left(X_s^\pi, a_s^\pi, \pi\left(\cdot \mid X_s^\pi\right)\right)\right] \mathrm{d} s \mid X_0^\pi = x \right] \\
& \, = \mathbb{E} \left[\int_0^{\infty} e^{-\beta s} \left[\tilde{r}\left(\tilde{X}_s^\pi, \pi(\cdot \mid \tilde{X}_s^\pi)\right)+\gamma \tilde{p}\left(\tilde{X}_s^\pi ,\pi(\cdot \mid \tilde{X}_s^\pi)\right)\right] \mathrm{d} s\mid \tilde{X}_0^\pi=x\right],
\end{aligned}
\end{equation}
which, under suitable conditions on model parameters $(b, \sigma, r, p)$ and the policy $\pi$, is characterized by the Hamilton-Jacobi equation 
(see \cite{jia2022policy_gradient, tang2022exploratory}):
\begin{equation}
\label{Value Function PDE}
\beta V(x;\pi)-\tilde{b}(x, \pi) \cdot \nabla V(x;\pi)-\frac{1}{2} \tilde{\sigma}^2(x, \pi)\circ \nabla^2 V(x;\pi)-\tilde{r}(x, \pi)
-\gamma \tilde{p}(x,\pi)=0.
\end{equation}

\noindent More technical details regarding the above formulation are spelled out in the Appendix. We can now define the performance metric as follows:
\begin{equation}
\label{performancemet}
\eta(\pi):= \int_{\mathbb{R}^n} V(x; \pi) \mu(dx),
\end{equation}
so $V^* = \max_\pi \eta(\pi)$. 
The main task of the continuous RL is to approximate
$\max_\pi \eta(\pi)$ by constructing a sequence of policies $\pi_k$, $k = 1,2,\ldots$ recursively such that $\eta(\pi_k)$ is non-decreasing.

\vskip 7pt
\noindent\textbf{Policy evaluation}. Let's first recall a general approach in \cite{jia2022policy_evaluation}, 
which can be used to learn the state value function in \eqref{Value function Definition} or the performance metric in \eqref{performancemet} for a given policy $\pi$.
The idea is that for any $T > 0$ and a suitable test process $(\xi_t, \, t \ge 0)$,

\begin{equation}
\mathbb{E} \int_0^{T} \xi_t\,\left[\mathrm{d} V\left( X_t^{\pi}; \pi\right)+r\left(X_t^{\pi}, a_t^{\pi}\right) \mathrm{d} t+\gamma p\left(X_t^{\pi}, a_t^{\pi}, \pi\left(\cdot \mid X_t^{\pi}\right)\right) \mathrm{d} t-\beta V\left( X_t^{\pi}; \pi\right) \mathrm{d} t \right]=0.
\end{equation}
If we parameterize $V(x;\pi) = V^\phi(x)$ 
and choose the special test function $\xi_t =\frac{\partial V^\phi \left( X_t^{\pi}\right)}{\partial \phi}$,
stochastic approximation leads to the online update:

\begin{equation}
\label{critic update}
\phi \leftarrow \phi+\alpha \frac{\partial V^\phi\left( X_t^{\pi}\right)}{\partial \phi}[\mathrm{d} V^\phi \left( X_t^{\pi}\right)+r\left(X_t^{\pi}, a_t^{\pi}\right) \mathrm{d} t 
+\gamma p\left(X_t^{\pi}, a_t^{\pi}, \pi\left(\cdot \mid X_t^{\pi}\right)\right) \mathrm{d} t-\beta V^\phi \left( X_t^{\pi}\right) \mathrm{d} t ],
\end{equation}
{where $\alpha > 0$ is the learning rate.}
This recovers the mean-squared TD error (MSTDE) method 
for policy evaluation in {the} discrete RL \cite{sutton1988learning}.

\vskip 7pt
\noindent\textbf{$q$-value}.
The Q-value function \cite{watkins1992q} and the advantage function \cite{baird1994reinforcement,mnih2016asynchronous} in discrete-time MDPs 
play a critical role in reinforcement learning theory and algorithms. However, as pointed out in \cite{tallec2019making}, 
these concepts will not apply when the time interval {shrinks} to $0$ {(as in the continuous setting)}. 
To derive algorithms that fit the need of a continuous stochastic environment,
\cite{tallec2019making,jia2022q_learning} proposed the advantage {\it rate} function.
Namely, 
given a policy $\pi$ and $(t,x, a) \in [0,\infty)\times\mathbb{R}^n \times \mathcal{A}$, consider a ``perturbed" policy $\hat{\pi}$ as follows.
It takes the action $a \in \mathcal{A}$ on $[t, t+\Delta t)$ where $\Delta t>0$, and then follows $\pi$ on $[t+\Delta t, \infty)$. The corresponding state process $X^{\hat{\pi}}$ given $X_t^{\hat{\pi}}=x$ is broken into two pieces. On $[t, t+\Delta t)$, it is $X^a$ which is the solution to
\begin{equation}
\label{perturbSDE}
\mathrm{d} X_s^a=b\left(X_s^a, a\right) \mathrm{d} s+\sigma\left(X_s^a, a\right) \mathrm{d} B_s, s \in[t, t+\Delta t), \quad  X_t^a=x,
\end{equation}
while on $[t+\Delta t, \infty)$, it is $X^\pi$ following \eqref{SDE_Dynamics_exp} with the initial time-state pair $\left(t+\Delta t, X_{t+\Delta t}^a\right)$.
With $\Delta t>0$ as time discretization, the generalization of the conventional Q-function can be expressed as
\begin{equation}
\begin{aligned}
Q_{\Delta t}(x, a ; \pi) = V(x ; \pi)+\left[\mathcal{H}^a \left(x, \frac{\partial V}{\partial x}\left(x ; \pi\right), \frac{\partial^2 V}{\partial x^2}\left(x ; \pi\right) \right)-\beta V\left(x ; \pi\right)\right] \Delta t+o(\Delta t) .
\end{aligned}
\end{equation}
where $\mathcal{H}^{a}(x, y, A):=b(x, a) \cdot y+\frac{1}{2} \sigma^2(x, a)\circ A+r(x, a)$ is the (generalized) Hamilton function in stochastic control theory \cite{yong1999stochastic}. 
This motivates the following definition.

\begin{defn} \cite{jia2022q_learning}
 For a given policy $\pi \in \Pi$ and $(x, a) \in\mathbb{R}^n \times \mathcal{A}$, define the $q$-value as 
\begin{equation}
\label{defqvalue}
q(x, a ; \pi):=\lim_{\Delta t\rightarrow 0}\frac{Q_{\Delta t}(x, a ; \pi)-V(x ; \pi)}{\Delta t}=\mathcal{H}^a\left(x, \frac{\partial V}{\partial x}\left(x ; \pi\right), \frac{\partial^2 V}{\partial x^2}\left(x ; \pi\right)\right)-\beta V\left(x ; \pi\right),
\end{equation}
\end{defn}
\noindent which represents the instantaneous advantage rate of an action in a given state under a given policy. 

\section{Main Results}
\label{sc3}
This section is concerned with theoretical developments.
In Section \ref{sc31}, we define the (discounted) occupation time/measure 
which is the continuous analog of visitation frequency in MDPs. 
It is crucial in deriving the performance-difference formula in Section \ref{sc32},
which spins off two different algorithms -- policy gradient and TRPO/PPO.
In Section \ref{sc33}, we propose a local approximation for the performance metric, 
and derive a bound from which an MM algorithm is constructed. 

\subsection{Discounted Occupation Time}
\label{sc31}
Here we first provide an answer to (\textbf{Q1}) by 
{defining the notion of {\em discounted occupation time} for the continuous RL}. 
\begin{defn} 
\label{def:DOT}
Let $X = (X_t, \, t \ge 0)$ be governed by the SDE \eqref{SDE_Dynamics_exp},
{and assume that it has a probability density function} $p^\pi(\cdot,t)$ at each time $t$.
For each $x \in \mathbb{R}^n$ and $t \geq 0$, 
define the $\beta$-discounted occupation time of $X$ at the state $x$ by
\begin{equation}
\label{discountOT}
d^\pi_\mu(x):= \int_0^\infty e^{-\beta s} p^\pi(x,s) ds.
\end{equation}
So $d^\pi_\mu(\cdot)$ induces a finite measure on $\mathbb{R}^n$ with a total mass of $\beta^{-1}$, 
{which we call the discounted occupation measure.}
\end{defn}
In probability theory, the definition in \eqref{discountOT} is referred to as the $\beta$-potential of $X$,
which gives discounted visitation frequencies of the state process. 
We record the following result,
{which will be useful in the derivation of the performance-difference formula.
It is a consequence of the occupation time formula \cite{pitman2003hitting, revuz2013continuous}.
}
\begin{lem} 
\label{lemma:density of occupation}
{Under the conditions
in \ref{def:DOT},
we have
$\mathbb{E}\int_0^{\infty} e^{-\beta s}\varphi\left(X_s\right) \mathrm{d}s=\int_{\mathbb{R}^n} d_{\mu}^{\pi}(x) \varphi(x) \mathrm{d} x$,
for any measurable function $\varphi:\mathbb{R}^n\mapsto \mathbb{R}_+$ for which the expectation exists.
}
\end{lem}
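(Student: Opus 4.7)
The plan is a direct application of Tonelli's theorem, exploiting the fact that $X_s$ has a density $p^\pi(\cdot,s)$ by the assumption in Definition \ref{def:DOT}, and that all integrands in sight are non-negative.

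First I would expand the expectation in the left-hand side by conditioning on $X_s$. Since $X_s$ admits the density $p^\pi(\cdot,s)$ for each $s$, we have $\mathbb{E}\,\varphi(X_s) = \int_{\mathbb{R}^n} \varphi(x) p^\pi(x,s)\, dx$. Substituting this into $\mathbb{E}\int_0^\infty e^{-\beta s}\varphi(X_s)\,ds$, after bringing the expectation inside the time integral (again justified by non-negativity), yields a double integral
\[
\int_0^\infty \int_{\mathbb{R}^n} e^{-\beta s} \varphi(x) p^\pi(x,s)\, dx\, ds.
\]

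Next I would apply Tonelli's theorem to swap the order of integration. This step is legitimate because $e^{-\beta s} \ge 0$, $\varphi \ge 0$ (it is assumed $\mathbb{R}_+$-valued), and $p^\pi(x,s)\ge 0$. Swapping gives
\[
\int_{\mathbb{R}^n} \varphi(x) \left(\int_0^\infty e^{-\beta s} p^\pi(x,s)\, ds\right) dx = \int_{\mathbb{R}^n} \varphi(x)\, d_\mu^\pi(x)\, dx,
\]
where the inner integral is precisely the discounted occupation density from \eqref{discountOT}. This closes the identity.

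There is essentially no real obstacle: the only delicate point is confirming that the hypothesis "the expectation exists" on the left implies enough integrability to bring the expectation inside the $ds$-integral in the first step. With $\varphi \ge 0$ and Fubini–Tonelli, even the infinite value $+\infty$ is preserved on both sides, so the identity holds in $[0,\infty]$ without any further assumption; finiteness on one side is equivalent to finiteness on the other. This is a standard consequence of the occupation-time (Fubini) argument cited in the statement, so I would not dwell on routine measure-theoretic checks beyond invoking Tonelli.
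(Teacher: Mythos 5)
Your proof is correct and is essentially the same argument the paper relies on: the paper simply cites the occupation-time formula, whose content is exactly the Tonelli/Fubini exchange you carry out, using the density $p^\pi(\cdot,s)$ and non-negativity of $\varphi$ to identify the inner $ds$-integral with $d_\mu^\pi$. Nothing further is needed.
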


\subsection{Performance-Difference Formula}
\label{sc32}
We are now ready to answer (\textbf{Q2}) 
by deriving the performance-difference formula between two policies
in terms of the discounted occupation time in \eqref{discountOT} and the $q$-values in \eqref{defqvalue}.
\begin{thm}: 
\label{thm:PDF_cont}
{Given two feedback} policies $\hat{\pi} = \{\hat{\pi}(\cdot \mid x): x \in \mathbb{R}^n\}$ and $\pi = \{\pi(\cdot \mid x): x \in \mathbb{R}^n\}$, 
we have
\begin{equation}
\label{eqn:PDF_cont}
\eta(\hat{\pi})-\eta(\pi)=\int_{\mathbb{R}^n} d^{\hat{\pi}}_{\mu}(x) \left[\int_{\mathcal{A}}\hat{\pi}(a\mid x)\left(q(x,a;\pi)+\gamma p(x,a,\hat{\pi})\right)\mathrm{d}a\right]\mathrm{d}x
\end{equation}
\end{thm}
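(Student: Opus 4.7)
The plan is to compare the two value functions $V(x;\hat\pi)$ and $V(x;\pi)$ along the \emph{same} trajectory, namely the one driven by $\hat\pi$, and then average over $x \sim \mu$ using Lemma \ref{lemma:density of occupation}. The key identity that makes $q(\,\cdot\,;\pi)$ appear is the Hamilton--Jacobi equation \eqref{Value Function PDE} satisfied by $V(\,\cdot\,;\pi)$, combined with It\^o's formula applied to $e^{-\beta t}V(\tilde X_t^{\hat\pi};\pi)$ along the $\hat\pi$-dynamics.

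First I would write down the Feynman--Kac representation of $V(x;\hat\pi)$: starting from $\tilde X_0^{\hat\pi}=x$ and using \eqref{SDE_Dynamics_exp} under $\hat\pi$,
\begin{equation*}
V(x;\hat\pi)=\mathbb{E}\!\int_0^\infty e^{-\beta t}\bigl[\tilde r(\tilde X_t^{\hat\pi},\hat\pi)+\gamma\tilde p(\tilde X_t^{\hat\pi},\hat\pi)\bigr]\mathrm{d}t.
\end{equation*}
Next, apply It\^o's formula to $e^{-\beta t}V(\tilde X_t^{\hat\pi};\pi)$ along the $\hat\pi$-dynamics. Integrating from $0$ to $T$, taking expectations (so the stochastic-integral term vanishes under the standard integrability of $\nabla V(\cdot;\pi)\tilde\sigma(\cdot,\hat\pi)$), and letting $T\to\infty$ with the transversality condition $\mathbb{E}[e^{-\beta T}V(\tilde X_T^{\hat\pi};\pi)]\to 0$, I obtain
\begin{equation*}
V(x;\pi)=\mathbb{E}\!\int_0^\infty e^{-\beta t}\Bigl[\beta V(\tilde X_t^{\hat\pi};\pi)-\tilde b(\tilde X_t^{\hat\pi},\hat\pi)\cdot\nabla V(\tilde X_t^{\hat\pi};\pi)-\tfrac12\tilde\sigma^2(\tilde X_t^{\hat\pi},\hat\pi)\circ\nabla^2 V(\tilde X_t^{\hat\pi};\pi)\Bigr]\mathrm{d}t.
\end{equation*}

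Subtracting and collecting terms, the integrand for $V(x;\hat\pi)-V(x;\pi)$ at state $y=\tilde X_t^{\hat\pi}$ becomes
\begin{equation*}
\tilde r(y,\hat\pi)+\gamma\tilde p(y,\hat\pi)+\tilde b(y,\hat\pi)\cdot\nabla V(y;\pi)+\tfrac12\tilde\sigma^2(y,\hat\pi)\circ\nabla^2 V(y;\pi)-\beta V(y;\pi).
\end{equation*}
Averaging the definition \eqref{defqvalue} of $q(y,a;\pi)=\mathcal{H}^a(y,\nabla V,\nabla^2 V)-\beta V(y;\pi)$ against $\hat\pi(\cdot\mid y)$ recovers exactly the above expression minus $\gamma\tilde p(y,\hat\pi)$, so the integrand equals $\int_{\mathcal A}\hat\pi(a\mid y)\bigl[q(y,a;\pi)+\gamma p(y,a,\hat\pi)\bigr]\mathrm{d}a$.

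Finally, I integrate over $x\sim\mu$ to pass from $V(x;\hat\pi)-V(x;\pi)$ to $\eta(\hat\pi)-\eta(\pi)$, and invoke Lemma \ref{lemma:density of occupation} with the $\hat\pi$-dynamics to convert $\mathbb{E}\!\int_0^\infty e^{-\beta t}\varphi(\tilde X_t^{\hat\pi})\mathrm{d}t$ into $\int_{\mathbb{R}^n}d^{\hat\pi}_\mu(x)\varphi(x)\mathrm{d}x$, with $\varphi(x):=\int_{\mathcal A}\hat\pi(a\mid x)[q(x,a;\pi)+\gamma p(x,a,\hat\pi)]\mathrm{d}a$; this is exactly \eqref{eqn:PDF_cont}. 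The main obstacle is purely technical: justifying that the It\^o stochastic integral along the $\hat\pi$-trajectory is a true martingale (not just a local one) and that the transversality condition $\mathbb{E}[e^{-\beta T}V(\tilde X_T^{\hat\pi};\pi)]\to 0$ holds. Both follow from standard integrability/growth hypotheses on $V(\cdot;\pi)$, $\nabla V(\cdot;\pi)$, $\tilde b$, $\tilde\sigma$, and the discount rate $\beta>0$, which are subsumed in the well-posedness assumptions stated after \eqref{SDE_dynamic}.
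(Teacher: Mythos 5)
Your proposal is correct and follows essentially the same route as the paper: your It\^o-formula step expressing $V(x;\pi)$ as a discounted integral of $-\mathcal{L}^{\hat\pi}V(\cdot;\pi)$ along the $\hat\pi$-trajectory is exactly the content (and proof) of Lemma~\ref{Lem: d times L}, and your identification of the resulting integrand with $\int_{\mathcal A}\hat\pi(a\mid x)\left[q(x,a;\pi)+\gamma p(x,a,\hat\pi)\right]\mathrm{d}a$ via the definition \eqref{defqvalue} is the content of Lemma~\ref{Lem:Equality}, after which averaging over $\mu$ and using the occupation-time lemma gives \eqref{eqn:PDF_cont} just as in Appendix~\ref{Appendix:Proof of performance difference formula}. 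The only cosmetic difference is that the paper inserts the Hamilton--Jacobi identity \eqref{Value Function Operator Equation} as an add-and-subtract step, which your merged computation renders unnecessary.
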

\textit{Proof sketch}. The full proof is detailed in Appendix \ref{Appendix:Proof of performance difference formula}. 
The essence of the proof is to use the perturbation theory and properties of the discounted occupation time. 
Define an operator $\mathcal{L}^{\pi}:C^2(\mathbb{R}^n)\mapsto C(\mathbb{R}^n)$ associated with the diffusion process as:
\begin{equation}
\begin{aligned}
\left(\mathcal{L}^{\pi} \varphi\right)(x)&:=-\beta \varphi(x) +\tilde{b}(x, \pi) \cdot \nabla \varphi(x)+\frac{1}{2} \tilde{\sigma}(x, \pi)^2\circ \nabla^2 \varphi(x).
\end{aligned}
\end{equation}
Then the {Hamilton-Jacobi equation that characterizes the state value function can be expressed as:}
\begin{equation}
-\mathcal{L}^{\pi} V(x;\pi)=\tilde{r}(x, \pi)+\gamma \tilde{p}(x, \pi)
\end{equation}
{Note that} for any $\varphi\in C^2(\mathbb{R}^n)$, we have $\int_{\mathbb{R}^n} d^{\pi}_{\mu}(y) (-\mathcal{L}^{\pi} \varphi)(y)\mathrm{d}y=\int_{\mathbb{R}^n} \varphi(y) \mu(\mathrm{d}y)$. 
{This allows us to express} the performance difference in model-dynamics related terms:
\begin{equation}
\eta(\hat{\pi})-\eta(\pi)=\int_{\mathbb{R}} d^{\hat{\pi}}_{\mu}(y) \left[(\mathcal{L}^{\hat{\pi}}-\mathcal{L}^{\pi}) V(y;\pi)+\tilde{r}(y, \hat{\pi})+\gamma \tilde{p}(y, \hat{\pi})-\tilde{r}(y, \pi)-\gamma \tilde{p}(y, \pi)\right]\mathrm{d}y .
\end{equation}
What remains is to reduce the above to the desired result in \eqref{eqn:PDF_cont}. 
\hfill$\square$
\vskip 7 pt
As discussed in Section \ref{sc2}, our main task is to 
construct (algorithmically) a sequence of policies $\pi_k$ 
along which the performance improves. 
Here we illustrate how some well known approaches of policy improvement (from $\pi$ to $\hat{\pi}$)
are instances of the performance difference formula \eqref{eqn:PDF_cont}.
\vskip 7 pt
(a) {\bf $q$-learning and soft $q$-learning}. 
Since $d^{\hat{\pi}}_{\mu}\geq 0$, 
we only need to ensure that for all $x \in \mathbb{R}^n$,
$\int_{\mathcal{A}}\hat{\pi}(a\mid x)\left(q(x,a;\pi)+\gamma p(x,a,\hat{\pi})\right)\mathrm{d}a\geq 0$.
{This boils down to the problem that}
for any $x$, find $v\equiv \hat{\pi}(\cdot|x)$ to maximize
\begin{equation}
\int_{\mathcal{A}}v(a)\left(q(x,a;\pi)+\gamma p(x,a,v)\right)\mathrm{d}a.
\end{equation}
There are two special cases:

\textit{(i)} If $p(x,a,v)\equiv0$, 
then $v=\delta(a^*)$ where $a^*=\arg\max_{a} q(x,a,\pi)$. 
{This is essentially the counterpart of $Q$-Learning \cite{watkins1992q} in the discrete time}, 
which we call $q$-learning.

\textit{(ii)} If $p(x,a,v)=-\log(v(a))$, this is known as the entropy regularizer \cite{haarnoja2017reinforcement,wang2020continuous}. 
Concretely, we need to solve
\begin{equation}
\label{eqn:Optimization I}
\max_{v\in\mathcal{P}(\mathcal{A})}\int_{\mathcal{A}}v(a)\left(q(x,a;\pi)-\gamma \log v(a)\right)\mathrm{d}a.
\end{equation}
which has a closed form solution with $v^*(a)\propto \exp(\frac{q(x,a,\pi)}{\gamma})$, i.e. $v^*$ is the Boltzmann policy for $q$-functions. 
This is a ``soft'' ({\it \`a la} \cite{haarnoja2017reinforcement}) version of the $q$-learning mentioned above.
\vskip 7 pt
(b) {\bf Policy gradient}. 
{We use function approximations} to $\pi$ by a parametric family $\pi^\theta$, with $\theta \in \Theta \subseteq \mathbb{R}^L$.
For simplicity, 
write $d^{\theta_0}_{\mu}$ (resp. $\eta(\theta)$)
for $d^{\pi^{\theta_0}}_{\mu}$ (resp. $\eta(\pi^\theta)$).
Setting $\hat{\pi}=\pi^\theta$ and $\pi=\pi^{\theta_0}$ in (\ref{eqn:PDF_cont}) and taking derivative with respect to $\theta$ on both sides, 
we get the following result.
\begin{thm}[Policy Gradient] 
\label{Theorem:PG_cont}
The policy gradient at {$\pi^{\theta_0}$} is:
\begin{equation}
\label{Eqn:PG_cont}
\nabla_{\theta}\eta(\theta)\mid_{\theta=\theta_0}=\frac{1}{\beta} {\mathbb{E}}_{(x,a)}
\left[\nabla_{\theta}\log\left(\pi^\theta(a\mid x)\right)\left(q(x,a;\pi^{\theta_0})+\gamma p(x,a,\pi^{\theta_0})\right)
+\gamma\nabla_{\theta}p(x,a,\pi^{\theta})\right],
\end{equation}
where the expectation is w.r.t. $(x,a)\sim (\beta d^{\theta_0}_{\mu},\pi^{\theta_0})$, meaning
$x\sim \beta d^{\theta_0}_{\mu}(\cdot)$ and then $a\sim\pi^{\theta_0}(\cdot\mid x)$.
\end{thm}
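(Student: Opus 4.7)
The plan is to differentiate the performance-difference formula of Theorem \ref{thm:PDF_cont} directly, with $\hat{\pi} = \pi^\theta$ and $\pi = \pi^{\theta_0}$, and exploit the fact that the bracketed integrand vanishes at $\theta = \theta_0$. Writing
\begin{equation*}
\eta(\theta) - \eta(\theta_0) = \int_{\mathbb{R}^n} d^{\theta}_\mu(x)\, F(x,\theta)\, dx,\qquad F(x,\theta) := \int_{\mathcal{A}} \pi^{\theta}(a\mid x)\bigl(q(x,a;\pi^{\theta_0}) + \gamma p(x,a,\pi^{\theta})\bigr)\,da,
\end{equation*}
I would apply the product rule and obtain two contributions upon differentiating in $\theta$ at $\theta_0$: one from $\nabla_\theta d^{\theta}_\mu(x)$ paired with $F(x,\theta_0)$, and one from $d^{\theta_0}_\mu(x) \nabla_\theta F(x,\theta)\mid_{\theta_0}$. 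The first step is to show the first term vanishes.

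The key observation is $F(x,\theta_0) \equiv 0$. Indeed, integrating the definition \eqref{defqvalue} of $q$ against $\pi^{\theta_0}(\cdot\mid x)$ and using the notation $\tilde{b},\tilde{\sigma},\tilde{r},\tilde{p}$, one gets
\begin{equation*}
\int_{\mathcal{A}} \pi^{\theta_0}(a\mid x)\, q(x,a;\pi^{\theta_0})\, da = \tilde{b}(x,\pi^{\theta_0})\cdot\nabla V + \tfrac{1}{2}\tilde{\sigma}^2(x,\pi^{\theta_0})\circ \nabla^2 V + \tilde{r}(x,\pi^{\theta_0}) - \beta V(x;\pi^{\theta_0}),
\end{equation*}
which by the HJB equation \eqref{Value Function PDE} equals $-\gamma \tilde{p}(x,\pi^{\theta_0})$. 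Hence $F(x,\theta_0) = 0$, and the problematic $\nabla_\theta d^{\theta}_\mu$ term drops out, so I do not need to compute the sensitivity of the occupation measure with respect to the policy parameter; this is the step I expect to be the main conceptual hurdle, since otherwise one would be forced to differentiate through an SDE whose coefficients depend on $\theta$.

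Next I would differentiate $F(x,\theta)$ in $\theta$ under the integral sign. The two $\theta$-dependent pieces are $\pi^\theta(a\mid x)$ and $p(x,a,\pi^\theta)$. Using the log-derivative identity $\nabla_\theta \pi^\theta(a\mid x) = \pi^\theta(a\mid x)\, \nabla_\theta \log \pi^\theta(a\mid x)$, I would write
\begin{equation*}
\nabla_\theta F(x,\theta)\Big|_{\theta_0} = \int_{\mathcal{A}} \pi^{\theta_0}(a\mid x)\Bigl[\nabla_\theta \log \pi^\theta(a\mid x)\bigl(q(x,a;\pi^{\theta_0}) + \gamma p(x,a,\pi^{\theta_0})\bigr) + \gamma \nabla_\theta p(x,a,\pi^\theta)\Bigr]_{\theta_0} da.
\end{equation*}
Finally, I would substitute back into the expression for $\nabla_\theta \eta(\theta_0)$, and rewrite $d^{\theta_0}_\mu(x)\, dx$ as $\beta^{-1}\cdot \beta d^{\theta_0}_\mu(x)\, dx$ using Definition \ref{def:DOT} (total mass $\beta^{-1}$). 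Reinterpreting $\beta d^{\theta_0}_\mu$ as a probability density over $\mathbb{R}^n$ and $\pi^{\theta_0}(\cdot\mid x)$ as the conditional action law yields precisely the expectation form in \eqref{Eqn:PG_cont}. Mild regularity (dominated convergence to justify $\nabla_\theta$ under the integrals, and smoothness of $\pi^\theta$ in $\theta$) will be invoked tacitly, consistent with the well-posedness assumptions made earlier in Section \ref{sc2}.
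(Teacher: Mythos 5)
Your proposal is correct and follows essentially the same route as the paper's own proof: both apply the performance-difference formula with $\hat{\pi}=\pi^{\theta}$, $\pi=\pi^{\theta_0}$, observe via the HJ equation that the inner integrand vanishes at $\theta=\theta_0$ (your $F(x,\theta_0)=0$ is the paper's $f(0)=0$), conclude that the occupation-measure sensitivity term therefore contributes nothing (the paper phrases this as the cross term $\langle d^{\theta_0+\delta\theta}_{\mu}-d^{\theta_0}_{\mu},\,f(\delta\theta)-f(0)\rangle$ being of higher order, while you use the product rule against a zero factor), and then differentiate under the integral with the log-derivative identity and normalize by $\beta$ to reach the expectation form. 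No substantive gap.
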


The above formula is indeed the continuous analogue to the well-known PG formula (without regularization) in the MDP setting, where $\nabla_{\theta}\eta(\theta)\mid_{\theta=\theta_0}=\frac{1}{\beta}{\mathbb{E}}_{(x,a)}\left[\nabla_{\theta}\log\left(\pi^\theta(a\mid x)\right) A(x,a;\pi^{\theta_0})\right]$ (\cite{sutton1999policy}), with $A$ denoting the advantage function. 
Specifically,  as a comparison, the formula in (\ref{Eqn:PG_cont}) replaces the visitation frequency by the occupation time, and the advantage function by the $q$-function, while keeping the same score function $\nabla_{\theta}\log\left(\pi^\theta(a\mid x)\right)$.

\subsection{Continuous TRPO/PPO}
\label{sc33}
Leveraging the performance-difference formula derived above, we can now move on to spell out the continuous counterpart of 
TRPO and PPO originally developed in
\cite{schulman2015trust,schulman2017proximal} for the discrete RL.

\textbf{Local Approximation Function.}
{Given a feedback policy $\hat{\pi}$, 
we define the {\em local approximation function} to $\eta(\hat{\pi})$ by
}
\begin{eqnarray}
\label{TRPO objective cont integral}
	L^{\pi}(\hat{\pi})=\eta(\pi)+\int_{\mathbb{R}^n} d^{\pi}_{\mu}(x) \left[\int_{\mathcal{A}}\hat{\pi}(a\mid x)\left(q(x,a;\pi)+p(x,a,\hat{\pi})\right)\mathrm{d}a \right]\mathrm{d}x.
\end{eqnarray}

Comparing (\ref{TRPO objective cont integral}) to the formula (\ref{eqn:PDF_cont}), 
{we see that the difference is to replace $d_{\mu}^{\hat{\pi}}(s)$ with $d_{\mu}^{\pi}(s)$}. 
Observe that 
$\text{(i) }L^{\pi}(\pi)=\eta(\pi)$, $\text{(ii) }\nabla_{\theta}\left(\eta(\pi^\theta)\right)\mid_{\theta=\theta_0}=\nabla_{\theta}\left(L^{\pi^{\theta_0}}(\pi^{\theta})\right)\mid_{\theta=\theta_0}$, 
i.e. the local approximation function and the true performance objective share the same value and the same gradient with respect to the policy parameters.
Thus, the local approximation function can be regarded as the first order approximation to the performance metric.
{Furthermore, similar to \cite{kakade2002approximately,schulman2015trust},}
we can apply simulation methods {to evaluate} the local approximation function only using the data generated from the current policy $\pi$:
\begin{eqnarray}
\label{TRPO objective cont expectation}
	L^{\pi}(\hat{\pi})=\eta(\pi)+\frac{1}{\beta}\underset{(x,a)\sim (\beta d^{\pi}_{\mu},\pi(\cdot|x))}{\mathbb{E}}\left[\frac{\hat{\pi}(a\mid x)}{\pi(a\mid x)}\left(q(x,a;\pi)+\gamma p(x,a,\hat{\pi})\right)\right].
\end{eqnarray}

Next, we provide analysis and bounds on the gap $\eta(\hat{\pi}) - L^{\pi}(\hat{\pi})$, which can then be used to ensure policy improvement
(similar to approaches in \cite{schulman2015trust,zhang2021policy} for discounted/average reward MDP). 
First, we need some technical conditions on the model dynamics. 

\begin{aspt} 
\label{smooth aspt}
Assume the following conditions for the state dynamics hold true:\\
(i) Global boundedness: There exists $0<\sigma_0\leq\bar{\sigma}_0$ such that $
\sigma_0^2\cdot I \leq \tilde{\sigma}^2(x, a)\leq \bar{\sigma}_0^2\cdot I$ for all $x, a$;\\
(ii)Uniformly Lipschitz: There exists $C_{\tilde{\sigma}}>0$ such that
$\left\|\tilde{\sigma}(x,\pi)-\tilde{\sigma}(x^{\prime}, \pi)\right\|_{F} \leq C_{\tilde{\sigma}}\left\|x-x^{\prime}\right\|_2$ for all $\pi$ and $x,x'$;\\
(iii) Monotonicity (for drift) or growth condition:\\ There exists $C_{\tilde{b}}>0$
such that
$(x-x^{\prime})^{\top}\left(\tilde{b}(x,\pi)-\tilde{b}(x^{\prime}, \pi)\right) \leq C_{\tilde{b}}\left\|x-x^{\prime}\right\|_2^2$ for all $\pi$ and $x,x'$.
\end{aspt}

{The following lemma provides a Wasserstein-2 bound between the discounted occupation measures $d_{\mu}^\pi(\cdot)$ and $d_{\mu}^{\hat{\pi}}(\cdot)$ for two policies $\pi$ and $\pi'$.}
\begin{lem} 
\label{Lemma:Wasserstein distance bound}
{Let $\pi, \pi'$ be two feedback policies, 
and suppose the conditions in Assumption \ref{smooth aspt} hold.
Define $C_{\tilde{b},\tilde{\sigma}}:=2C_{\tilde{b}}+1+2 C_{\tilde{\sigma}}^2$
and $C=\sup_{x,a}|b(x,a)|^2+\frac{n \bar{\sigma}_0^2}{ 2\sigma_0^2}$.
(Recall $n$ is the dimension of the state.)
Assume further that $\beta>C_{\tilde{b},\tilde{\sigma}}$ and $C < \infty$.
}
Then {there is the bound}
\begin{equation}
\label{eq:W2bound}
W_{2}\left(\beta d_{\mu}^{\hat{\pi}}, \beta d_{\mu}^\pi \right) \leq \frac{C }{\beta C_{\tilde{b},\tilde{\sigma}}(\beta-C_{\tilde{b},\tilde{\sigma}})} \cdot \max\left(\sup_{x}\|\hat{\pi}(\cdot|x)-\pi(\cdot|x)\|_1,\sup_{x}\|\hat{\pi}(\cdot|x)-\pi(\cdot|x)\|^{\frac{1}{2}}_1\right).
\end{equation}
\end{lem}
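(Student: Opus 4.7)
The plan is to construct a synchronous coupling of the two diffusions and reduce the Wasserstein distance to an $L^2$ bound at an exponentially distributed time. Since $\int_0^\infty \beta e^{-\beta s}ds = 1$, Lemma~\ref{lemma:density of occupation} identifies $\beta d^{\pi}_\mu$ as the law of $\tilde X^\pi_T$ where $T \sim \mathrm{Exp}(\beta)$ is independent of the driving noise and $X_0$. Running \eqref{SDE_Dynamics_exp} for $\pi$ and $\hat\pi$ from the common initial condition $X_0\sim\mu$ with the same Brownian motion and the same $T$ therefore yields a coupling of $\beta d^\pi_\mu$ and $\beta d^{\hat\pi}_\mu$, and an application of Fubini gives the starting inequality
\[
W_2^2\bigl(\beta d^{\hat\pi}_\mu, \beta d^\pi_\mu\bigr) \;\leq\; \mathbb{E}\bigl\|\tilde X^\pi_T - \tilde X^{\hat\pi}_T\bigr\|_2^2 \;=\; \beta \int_0^\infty e^{-\beta s}\, f(s)\, ds, \qquad f(s):=\mathbb{E}\bigl\|\tilde X^\pi_s - \tilde X^{\hat\pi}_s\bigr\|_2^2.
\]

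Next I would apply It\^o's formula to $\|\tilde X^\pi_s - \tilde X^{\hat\pi}_s\|_2^2$ and telescope each coefficient difference as (state change under the common policy $\pi$) plus (policy change evaluated at $\tilde X^{\hat\pi}_s$). Assumption~\ref{smooth aspt}(iii) absorbs the state part of the drift into $2C_{\tilde b}\,f(s)$; Assumption~\ref{smooth aspt}(ii) together with $(a+b)^2\leq 2a^2+2b^2$ absorbs the state part of the diffusion into $2C_{\tilde\sigma}^2 f(s)$; Young's inequality with weight one converts the cross and policy-change terms into an additional $f(s)$ plus a uniform perturbation term $\Delta(\pi,\hat\pi)$ built from $\|\tilde b(x,\pi)-\tilde b(x,\hat\pi)\|_2^2$ and $\|\tilde\sigma(x,\pi)-\tilde\sigma(x,\hat\pi)\|_F^2$ evaluated along the trajectory and then bounded uniformly in $x$. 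This produces the differential inequality $f'(s)\leq C_{\tilde b,\tilde\sigma}\,f(s)+\Delta(\pi,\hat\pi)$ with $f(0)=0$. Gronwall's inequality yields $f(s)\leq \tfrac{\Delta}{C_{\tilde b,\tilde\sigma}}(e^{C_{\tilde b,\tilde\sigma}s}-1)$, and because $\beta>C_{\tilde b,\tilde\sigma}$ the integral against $\beta e^{-\beta s}$ collapses to a bound of the order $\tfrac{\Delta}{\beta-C_{\tilde b,\tilde\sigma}}$; a final square root produces \eqref{eq:W2bound} after tracking the constants.

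The main obstacle is controlling the diffusion perturbation $\|\tilde\sigma(x,\pi)-\tilde\sigma(x,\hat\pi)\|_F$, because $\tilde\sigma=(\int_{\mathcal A}\sigma^2(x,a)\pi(da\mid x))^{1/2}$ is a matrix square root rather than a linear functional of $\pi$. The drift perturbation is immediate: $\|\tilde b(x,\pi)-\tilde b(x,\hat\pi)\|_2\leq \sup_{x,a}|b(x,a)|\cdot\|\pi(\cdot\mid x)-\hat\pi(\cdot\mid x)\|_1$. For the diffusion I would use Assumption~\ref{smooth aspt}(i): on the set $\{A\succeq \sigma_0^2 I\}$ the map $A\mapsto A^{1/2}$ is Lipschitz with constant $\tfrac{1}{2\sigma_0}$ in the Frobenius norm, so $\|\tilde\sigma(x,\pi)-\tilde\sigma(x,\hat\pi)\|_F\leq \tfrac{1}{2\sigma_0}\|\tilde\sigma^2(x,\pi)-\tilde\sigma^2(x,\hat\pi)\|_F$, and the right-hand side is linear in $\|\pi(\cdot\mid x)-\hat\pi(\cdot\mid x)\|_1$; this ingredient is what produces the $\frac{n\bar\sigma_0^2}{2\sigma_0^2}$ contribution to the constant $C$. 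Complementarily, the Powers--St\o rmer-type inequality $\|\sqrt{A}-\sqrt{B}\|_F^2\leq \mathrm{tr}|A-B|$ provides a bound of order $\|\pi-\hat\pi\|_1^{1/2}$ after the final square root, which is sharper in the small-gap regime; taking whichever of the two bounds is tighter for the given policy gap is what produces the maximum of $\sup_x\|\hat\pi-\pi\|_1$ and $\sup_x\|\hat\pi-\pi\|_1^{1/2}$ in the statement.
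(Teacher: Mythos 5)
Your proposal follows essentially the same route as the paper's proof: a synchronous coupling of the two exploratory SDEs driven by the same Brownian motion (with the exponential-time representation of $\beta d_\mu^\pi$), It\^o's formula on $\|X_s-Y_s\|_2^2$ combined with the monotonicity/Lipschitz/Young estimates to obtain the differential inequality with constant $C_{\tilde b,\tilde\sigma}=2C_{\tilde b}+1+2C_{\tilde\sigma}^2$, Gr\"onwall's inequality, integration against the exponential weight under $\beta>C_{\tilde b,\tilde\sigma}$, and the $\tfrac{1}{2\sigma_0}$ matrix square-root perturbation bound yielding the constant $\sup_{x,a}|b(x,a)|^2+\tfrac{n\bar\sigma_0^2}{2\sigma_0^2}$. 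The only cosmetic difference is your appeal to a Powers--St\o rmer-type inequality to produce the $\|\cdot\|_1^{1/2}$ regime, where the paper extracts the same square-root behavior from its square-root perturbation lemma; the decomposition, constants, and conclusion are otherwise identical.
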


(The proof is deferred to Appendix \ref{Appendix:Proof of Wasserstein distance bound}.)
To {derive} a performance difference bound, define the Sobolev semi-norm as $K:=\|f\|_{\dot{H}^1}:=\left(\int_{\mathbb{R}^n} |\nabla f(x)|^2 \mathrm{d} x\right)^{\frac{1}{2}}$, and its dual norm $
\|\cdot\|_{\dot{H}^{-1}}$ as
$
\|\mu\|_{\dot{H}^{-1}}=\sup \left\{|\langle g, \mu\rangle| \mid\|g\|_{\dot{H}^1} \leq 1\right\}
$. 
\cite{loeper2006uniqueness,peyre2018comparison} show the equivalence of this dual norm $\|\mu-v\|_{\dot{H}^{-1}}$ to the Wasserstein-2 distance $W_2(\mu,v)$ for any probability measure $\mu$ and $v$.
Combining this fact with Lemma \ref{Lemma:Wasserstein distance bound} yields the following result.
\begin{thm}
\label{performancediffbound}
Suppose the conditions in Lemma \ref{Lemma:Wasserstein distance bound} hold, and assume $d_{\mu}^{\hat{\pi}}(x), d_{\mu}^\pi(x)$ are upper bounded by $M$ for all $x \in \mathbb{R}^n$.
Define $K:=\|f\|_{\dot{H}^1}$ with $f(x;\pi,\hat{\pi}):=\int_{\mathcal{A}}\hat{\pi}(a\mid x)\left(q(x,a;\pi)+p(x,a,\hat{\pi})\right)\mathrm{d}a
$ and $C(\mu,\pi,\hat{\pi}):=\frac{\sqrt{M}K}{2\beta^2 C_{\tilde{b},\tilde{\sigma}}(\beta-C_{\tilde{b},\tilde{\sigma}})}\left(\sup_{x,a}\|b(x,a)\|^2+\frac{n\bar{\sigma}_0^2}{ 2\sigma_0^2}\right)$.
Assuming $K, \, C(\mu, \pi, \hat{\pi}) < \infty$, we have $\eta ({\hat{\pi}}) \geq \underline{L}^{\pi}(\hat{\pi})$, where
\label{Theorem:Performance difference bound}
\begin{equation}
\label{eqn:Performance difference bound}
\underline{L}^{\pi}(\hat{\pi}):=L^{\pi}(\hat{\pi})-C(\mu,\pi,\hat{\pi})\cdot\max\left(\sup_{x}D_{\mathrm{KL}}(\hat{\pi}(\cdot|x) \|\pi(\cdot|x)),\sup_{x}\sqrt{D_{\mathrm{KL}}(\hat{\pi}(\cdot|x) \|\pi(\cdot|x))}\right).
\end{equation}
\end{thm}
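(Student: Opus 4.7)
\textbf{Proof proposal for Theorem \ref{performancediffbound}.}

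The plan is to express the gap $\eta(\hat\pi) - L^\pi(\hat\pi)$ as a linear functional of the signed measure $d_\mu^{\hat\pi} - d_\mu^\pi$ against the function $f(\cdot;\pi,\hat\pi)$, then pass from that functional to $W_2(\beta d_\mu^{\hat\pi}, \beta d_\mu^\pi)$ via the Sobolev dual norm, and finally bound $W_2$ by a KL-type quantity. Concretely, I would proceed in four steps.

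First, I would subtract the local-approximation identity \eqref{TRPO objective cont integral} from the performance-difference formula \eqref{eqn:PDF_cont} of Theorem \ref{thm:PDF_cont}. Since both expressions share the same integrand in $a$, namely $\hat\pi(a\mid x)\bigl(q(x,a;\pi)+p(x,a,\hat\pi)\bigr)$ which equals $f(x;\pi,\hat\pi)$ after integration over $\mathcal A$, this yields the clean representation
\begin{equation*}
\eta(\hat\pi)-L^{\pi}(\hat\pi)=\int_{\mathbb{R}^n}\bigl(d^{\hat\pi}_{\mu}(x)-d^{\pi}_{\mu}(x)\bigr)\,f(x;\pi,\hat\pi)\,\mathrm d x.
\end{equation*}
Thus lower-bounding $\eta(\hat\pi)$ by $L^\pi(\hat\pi)$ minus some nonnegative term reduces to upper-bounding the absolute value of this integral.

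Second, I would apply the duality pairing between $\dot H^1$ and $\dot H^{-1}$ to get $\bigl|\int(d^{\hat\pi}_\mu-d^\pi_\mu)f\,\mathrm dx\bigr|\le \|f\|_{\dot H^1}\cdot\|d^{\hat\pi}_\mu-d^\pi_\mu\|_{\dot H^{-1}}=K\,\|d^{\hat\pi}_\mu-d^\pi_\mu\|_{\dot H^{-1}}$. Then I would invoke the Loeper--Peyr\'e equivalence cited above the theorem, which for two densities bounded by some $M'$ gives $\|\rho_1-\rho_2\|_{\dot H^{-1}}\le \sqrt{M'}\,W_2(\rho_1,\rho_2)$. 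Rescaling to the probability measures $\beta d^{\hat\pi}_\mu$ and $\beta d^\pi_\mu$ (which are what Lemma \ref{Lemma:Wasserstein distance bound} controls) and using the simple scaling $W_2(c\rho_1,c\rho_2)=\sqrt c\,W_2(\rho_1,\rho_2)$ for a common mass factor $c>0$ along with the bound $\beta d^\pi_\mu,\beta d^{\hat\pi}_\mu\le\beta M$, the gap is bounded by a constant multiple of $\beta^{-1}\sqrt{\beta\cdot\beta M}\,W_2(\beta d^{\hat\pi}_\mu,\beta d^\pi_\mu)$, i.e.\ by $\sqrt M$ times $W_2$ of the probability measures, up to absolute constants that I would collect later.

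Third, I would plug in the Wasserstein bound of Lemma \ref{Lemma:Wasserstein distance bound}, which after the previous step produces exactly the prefactor $\frac{\sqrt M\, K}{\beta^{2}C_{\tilde b,\tilde\sigma}(\beta-C_{\tilde b,\tilde\sigma})}\bigl(\sup_{x,a}\|b(x,a)\|^2+\tfrac{n\bar\sigma_0^2}{2\sigma_0^2}\bigr)$ matching $C(\mu,\pi,\hat\pi)$ up to the factor of $2$ in the denominator, multiplied by $\max\bigl(\sup_x\|\hat\pi(\cdot\mid x)-\pi(\cdot\mid x)\|_1,\sup_x\|\hat\pi(\cdot\mid x)-\pi(\cdot\mid x)\|_1^{1/2}\bigr)$. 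Finally I would apply Pinsker's inequality $\|\hat\pi(\cdot\mid x)-\pi(\cdot\mid x)\|_1\le\sqrt{2 D_{\mathrm{KL}}(\hat\pi(\cdot\mid x)\,\|\,\pi(\cdot\mid x))}$ pointwise in $x$ and take the supremum, which turns $\|\cdot\|_1$ and $\|\cdot\|_1^{1/2}$ into $\sqrt{D_{\mathrm{KL}}}$ and $D_{\mathrm{KL}}^{1/4}$ respectively. Dominating $D_{\mathrm{KL}}^{1/4}$ by $\sqrt{D_{\mathrm{KL}}}$ when $D_{\mathrm{KL}}\ge 1$ and by itself otherwise, the $\max$ is controlled by $\max(D_{\mathrm{KL}},\sqrt{D_{\mathrm{KL}}})$ at the cost of an absolute constant that can be absorbed into the $1/2$ in the definition of $C(\mu,\pi,\hat\pi)$.

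The main obstacle I expect is bookkeeping the scaling between the occupation measure $d_\mu^\pi$ (mass $\beta^{-1}$) and its probability-measure normalization $\beta d_\mu^\pi$, since Loeper's inequality and Lemma \ref{Lemma:Wasserstein distance bound} are stated in different normalizations and naive application produces spurious powers of $\beta$. A secondary technical point is aligning the $\max(\|\cdot\|_1,\|\cdot\|_1^{1/2})$ produced by Lemma \ref{Lemma:Wasserstein distance bound} with the $\max(D_{\mathrm{KL}},\sqrt{D_{\mathrm{KL}}})$ appearing in \eqref{eqn:Performance difference bound} after Pinsker; the two agree in regime and only differ by absolute constants, but the cleanest argument separates the cases $D_{\mathrm{KL}}\le 1$ and $D_{\mathrm{KL}}>1$ to justify replacing $D_{\mathrm{KL}}^{1/4}$ by $\sqrt{D_{\mathrm{KL}}}$ in the small-KL regime.
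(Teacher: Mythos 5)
Your route is essentially the paper's own proof: write $\eta(\hat\pi)-L^{\pi}(\hat\pi)=\langle d_\mu^{\hat\pi}-d_\mu^{\pi},f\rangle$, bound it by $K\|\cdot\|_{\dot H^{-1}}$ via duality, pass to $W_2(\beta d_\mu^{\hat\pi},\beta d_\mu^{\pi})$ through the Loeper--Peyr\'e equivalence with the density bound $M$, invoke Lemma \ref{Lemma:Wasserstein distance bound}, and finish with Pinsker -- so the approach matches step for step, including your (correct) warning about the $\beta$-normalization bookkeeping. The one weak point is your final claim that $D_{\mathrm{KL}}^{1/4}$ can be absorbed into $\max\bigl(D_{\mathrm{KL}},\sqrt{D_{\mathrm{KL}}}\bigr)$ at the cost of an absolute constant: for $D_{\mathrm{KL}}<1$ the ratio $D_{\mathrm{KL}}^{1/4}/\sqrt{D_{\mathrm{KL}}}$ blows up as $D_{\mathrm{KL}}\to 0$, so no such constant exists; the paper's own proof is silent on this very conversion (it simply states that combining \eqref{K} with Lemma \ref{Lemma:Wasserstein distance bound} yields \eqref{eqn:Performance difference bound}), so this is a gap you share with, rather than introduce beyond, the published argument.
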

The proof is given in Appendix \ref{Appendix: Proof of Performance difference bound}. By Theorem \ref{Theorem:Performance difference bound}, 
we can use the minorization-maximization (MM) algorithm in \cite{hunter2004tutorial, kakade2002approximately, Lange16},
{where $\underline{L}^{\pi}(\hat{\pi})$ is taken as the surrogate function for $\eta(\pi)$.
Specifically, given the policy $\pi_k$,
if we can indeed solve the optimization problem $\max_{\hat{\pi}}\underline{L}^{\pi_k}(\hat{\pi})$, and designate its solution as
 $\pi_{k+1}$. Then, we have
\begin{equation}
\eta(\pi_{k+1})\geq \underline{L}^{\pi_k}(\pi_{k+1}) \geq \underline{L}^{\pi_k}(\pi_k)=\eta(\pi_{k})
\end{equation}
i.e., a guaranteed performance improvement.
{See also \cite[Chapter 7]{Lange16} and \cite{LW21} for the (global) convergence analysis of the MM algorithm
(which exceeds the scope of this work).} However, in general this optimization problem is not easy to solve directly since $C(\mu,\pi,\hat{\pi})$ is unknown, 
and we may have to work with sample based estimates of the approximation functions.
In the spirit of \cite{schulman2015trust,schulman2017proximal}, we provide algorithms in the next section that can be practically implemented by incorporating an adaptive penalty constant $C_{\text{penalty}}$ as an alternative to $C(\mu,\pi,\hat{\pi})$. 
Consequently, the resulting algorithms may no longer preserve the increasing performance of $\eta$ at each iteration, 
but overall increasing {\it trend} will clearly be there (as demonstrated in Figure \ref{Fig:Optimal pair trading}).

\section{Algorithms and Experiments}
\label{sc4}
\subsection{Sample-based Algorithms}
Based on the analysis and results developed above, we provide sample-based estimates of the objective functions that lead to practical algorithms. 
Here we highlight several hyper-parameters: the learning rate $\alpha$; the trajectory truncation parameter (time horizon) $T$ (needs to be sufficiently large);
 %\blue{to guarantee ergodicity}); 
 the total sample size $N$ or the sampling interval $\delta_t$, with $N\cdot \delta_t=T$.
Also denote $t_i:=i\cdot\delta_t$, $i=0, \ldots, N-1$, for the time points that we observe data from the environment. 

\begin{algorithm}[H]
\caption{CPG: Policy Gradient with $\exp(\beta)$ random rollout}
\label{Alg:PG Algorithm}
\hspace*{\algorithmicindent} \textbf{Input}: Policy parameters $\theta_0$, critic net parameters $\phi_0$, batch/sample size $J$
\begin{algorithmic}[1]
   \For{$k=0,1,2,\cdots$ until $\theta_k$ converges }
   \State Collect a truncated trajectory $\left\{X_{t_i}, a_{t_i}, r_{t_i}, p_{t_i}\right\}, i=1, \ldots, N$ from the environment using $\pi_{\theta_k}$.
   \For{$i=0, \ldots, N-1$}:  Update the critic parameters as in (\ref{critic update})
   \EndFor
   \For{$j=1,, \ldots, J$}: Draw i.i.d. $\tau_j$ from $\exp(\beta)$, round $\tau_j$ to the largest multiple of $\delta_t$ no larger than it, and compute the GAE estimator of $q(X_{\tau_j},a_{\tau_j})$ 
   \begin{equation}
   \tilde{q}(X_{\tau_j},a_{\tau_j}):=\left(r_{\tau_j}\delta_t+e^{-\beta\delta_t}V(X_{\tau_j+\delta_t})-V(X_{\tau_j})\right)/\delta_t
   \end{equation}
   \State Get an estimator of $\nabla_j\eta(\theta_k)$ as
   \begin{equation}
   \frac{1}{\beta}
\left[\nabla_{\theta}\log\left(\pi^{\theta_k}(a_{\tau_j}\mid X_{\tau_j})\right)\left(\tilde{q}(X_{\tau_j},a_{\tau_j})+\gamma p(X_{\tau_j},a_{\tau_j},\pi^{\theta_k})\right)
+\gamma\nabla_{\theta}p(X_{\tau_j},a_{\tau_j},\pi^{\theta_k})\right]
    \end{equation}
   \EndFor
   \State Let $\tilde{\nabla}\eta(\theta_k)=\frac{1}{J}\sum_{j=1}^{J}\nabla_j\eta(\theta_k)$ and perform PG update: $\theta_{k+1}=\theta_{k}+\alpha\tilde{\nabla}\eta(\theta_k)$
   \EndFor
\end{algorithmic}
\end{algorithm}
%\blue{
\textbf{Continuous Policy Gradient (CPG)}.
 To estimate the policy gradient (\ref{Eqn:PG_cont}) from data, we first sample an independent exponential variable $\tau\sim \exp(\beta)$ to get $(X^{\pi}_{\tau},a^{\pi}_{\tau})\sim (d^{\theta_0}_{\mu},\pi^{\theta_0}(\cdot|x))$. 
If there is a $q$-function oracle, then we can obtain an unbiased estimate of the policy gradient (of which the convergence analysis follows \cite{zhang2020global}). 
Lack of such an oracle, we employ the generalized advantage estimation (GAE) technique \cite{schulman2015high}
to get $q(X_{t},a_{t})\approx \left(Q_{\Delta t}(X_{t}, a_{t}; \pi)-V(X_{t}; \pi)\right)/\delta_t\approx\left(r_{t}\delta_t+e^{-\beta\delta_t}V(X_{t+\delta_t})-V(X_{t})\right)/\delta_t$.
This yields the policy gradient Algorithm \ref{Alg:PG Algorithm}}.\\

\textbf{Continuous PPO (CPPO)}. 
We now present Algorithm \ref{Alg:PPO adaptive constant}, a continuous version of the PPO, also as an approximation to the MM algorithm in Section \ref{sc33}. 
To do so, we need more hyper-parameters: the tolerance level $\epsilon$, and the KL-divergence radius $\delta$. Moreover, we set $
\bar{D}_{\mathrm{KL}}(\theta\|\theta_k):=\mathbb{E}_{x\sim d^{\theta_k}_\mu}\sqrt{D_{\mathrm{KL}}(\pi_{\theta_k}(\cdot|x) \|\pi_{\theta}(\cdot|x))}$. 
(Empirically we find that taking average, instead of supremum, over $x$ does not affect the algorithm performance while reducing computational burden, similar to what's observed in the discrete-time TRPO in \cite{schulman2015trust}.)
%}

\begin{algorithm}[H]
   \caption{CPPO: PPO with adaptive penalty constant}
   \label{Alg:PPO adaptive constant}
   \hspace*{\algorithmicindent} \textbf{Input}: Policy parameters $\theta_0$, critic net parameters $\phi_0$
\begin{algorithmic}[1]
   \For{$k=0,1,2,\cdots$ until $\theta_k$ converge }
   \State Follow the same as Steps 2-6 in Algorithm \ref{Alg:PG Algorithm}.
   \State Compute policy update (by taking a fixed $s$ steps of gradient descent)
\begin{equation}
\theta_{k+1}=\arg \max _\theta \left\{L^{\theta_k}(\theta)-C^k_{\text{penalty}} \bar{D}_{\mathrm{KL}}\left(\theta \| \theta_k\right)\right\}
\end{equation}

\If{$\bar{D}_{\mathrm{KL}}\left(\theta_{k+1} \| \theta_k\right) \geq (1+\epsilon)\delta$} $\quad C^{k+1}_{\text{penalty}}=2 C^k_{\text{penalty}}$
\ElsIf {$\bar{D}_{\mathrm{KL}}\left(\theta_{k+1} \| \theta_k\right) \leq \delta / (1+\epsilon)$} $\quad C^{k+1}_{\text{penalty}}=C^k_{\text{penalty}} / 2$
\EndIf
\EndFor
\end{algorithmic}
\end{algorithm}

Algorithm \ref{Alg:PPO adaptive constant} is essentially a continuous analogue of the TRPO/PPO methods.
Note that in the penalty term we use the mean square-root of the KL-divergence,  
since we choose the radius $\delta<1$; hence, the square-root distance will dominate in the bound in \eqref{eqn:Performance difference bound}. 
Moreover, interestingly, throughout our primary experiments, using the square-root KL-divergence outperforms (using the KL-divergence itself).
Refer to Appendix \ref{Appendix: Algorithm Details},\ref{Appendix: Experiments section} for more details.

\subsection{Experiments}
\textbf{LQ stochastic control}. 
Consider an environment driven by an SDE with linear state dynamics and quadratic rewards, with $b(x, a)=A x+B a$, $\sigma(x, a)=C x+D a$,
where $A, B, C, D \in \mathbb{R}$, $p(x,a,\pi)=-\log(\pi(a|x))$, and 
$
r(x, a)=-\left(\frac{M}{2} x^2+R x a+\frac{N}{2} a^2+P x+Q a\right),
$
where $M \geq 0, N>0, R, P, Q \in \mathbb{R}$. 
Linear-quadratic (LQ) control problems play an important role in the control literature, 
not only because it has elegant and simple solutions but also because more complex, nonlinear problems can be approximated by LQ problems. 
In general, we do not know the model parameters (e.g.,  $A, B, \ldots$), 
and the idea is to use continuous RL methods to find the optimal policy.

\begin{figure}[!htb]
   \begin{minipage}{0.48\textwidth}
     \centering
     \includegraphics[width=1\linewidth]{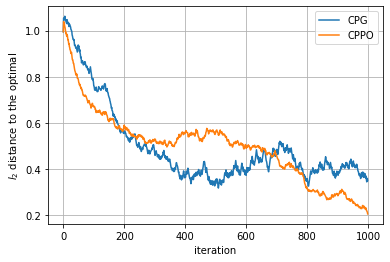}
     \caption{Convergence of $\theta$ in $l_2$ distance}\label{Fig:Convergence of l2}
   \end{minipage}\hfill
   \begin{minipage}{0.48\textwidth}
     \centering
     \includegraphics[width=1\linewidth]{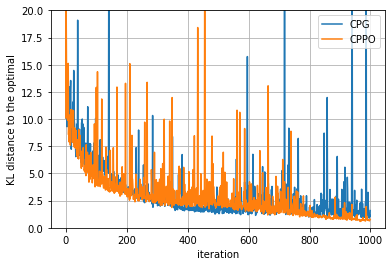}
     \caption{Convergence of $\pi_\theta$ in KL-divegence}\label{Fig:Convergence of KL}
   \end{minipage}
\end{figure}

Here we adopt a Gaussian exploration parameterized by $\theta$ as: $\pi_{\theta}(\cdot\mid x)=\mathcal{N}(\theta_1 x+\theta_2,\exp(\theta_3))$, 
and we also parameterize the value function by $\phi$ as $V_{\phi}(x)=\frac{1}{2} \phi_2 x^2+\phi_1 x+\phi_0$. 
(In fact, as shown in \cite[Theorem 4]{wang2020reinforcement}, the optimal exploration
%even not necessarily assuming guassian exploration, 
and value functions are of this form, and constants such as $\theta$ and $\phi$ can be computed explicitly given the model dynamics.) 
We randomly choose a set of initial constants, and compute the optimal $\theta^*$ and $\phi^*$ with respect to these parameters;
refer to Appendix \ref{Appendix: Experiments Details_Exp 1} for more details. 
Figure \ref{Fig:Convergence of l2},\ref{Fig:Convergence of KL}
show the convergence of algorithms for one certain realized trajectory.\\

In Figure \ref{Fig:Convergence of l2}, we compute the $l_2$ distance between the current policy parameters and the optimal ones, i.e. $\|\theta_k-\theta^*\|_2$, which tracks the convergence of the policy parameters.
In Figure \ref{Fig:Convergence of KL}, we plot the sample estimated KL divergence between the current policy $\pi_k$ (specified by $\theta_k$) and $\pi^{*}$ (specified by $\theta^*$), i.e. $\mathbb{E}_{x\sim d^{\theta_k}_\mu}D_{\mathrm{KL}}(\pi_{\theta_k}(\cdot|x) \|\pi_{\theta}(\cdot|x))$. 
The reason to consider the KL-divergence between $\pi_k$ and $\pi^*$ is that
minimizing the KL-divergence to the optimal solution is equivalent to minimizing the distance between the current policy objective and the optimal objective (see Appendix \ref{Appendix: Experiments Details_Exp 1})). 
The experiments illustrate that our proposed algorithms do converge to the (local) optimum.\\

We also compare the performance of CPO and CPPO to 
the approaches that directly discretize the time, 
and then apply the classical discrete-time PG and PPO algorithms. 
See the details in Appendix \ref{discrete vs continuous}. 
The experiments show that our proposed CPO and CPPO are comparable in terms of sample efficiency,
and in many cases they outperform the discrete-time algorithms
under a range of time discretization.\\

\textbf{2-dimensional optimal pair trading}.
Consider the 2-dimensional optimal pair trading problem in \cite{4586628}.
The state space is $X=(S,W)\in\mathbb{R}^2$ with $X(0)=(s_0,w_0)$, where $S$ represents the spread between two stocks, and $W$ denotes the corresponding wealth process. 
The trader intends to maximize the total discounted reward, with the reward function $r(X,a)=\log(1+W)$. 
The state dynamics are:
\begin{equation}
\mathrm{d} S_t= k(\theta-S_t) \mathrm{d} t+\eta \mathrm{d} B_t,\quad 
\mathrm{d} W_t= a_t W_t (k(\theta-S_t)+\frac{1}{2} \eta^2+\rho \sigma \eta+r_f) \mathrm{d} t+\eta W_t\mathrm{d} B_t,
\end{equation}
We set $p(x,a,\pi)\equiv 0$, and add a constraint on the action: $a_t\in[-\ell,\ell]$. 
(The action $a_t$ is the position taken on the first stock, which can be long/positive or short/negative.)
 Since the action space is bounded and continuous, we consider a beta distribution for policy parameterization: $\pi_\theta(a \mid X):=f\left(\frac{a+\ell}{2\ell}, \alpha_\theta(X), \beta_\theta(X)\right)$ with $f(x, \alpha, \beta):=\frac{\Gamma(\alpha+\beta)}{\Gamma(\alpha) \Gamma(\beta)} x^{\alpha-1}\left(1-x^{\beta-1}\right)$. 
 For $\alpha_{\theta}$ and $\beta_{\theta}$, we use a 3-layer neural network (NN) parameterized by $\theta$ for function approximation;
and use another 3-layer NN  for value function approximation. 
(More details are provided in  Appendix \ref{Appendix: Experiments Details_Exp 2}.)

\begin{figure}[!htb]
     \centering
     \includegraphics[width=0.75\linewidth]{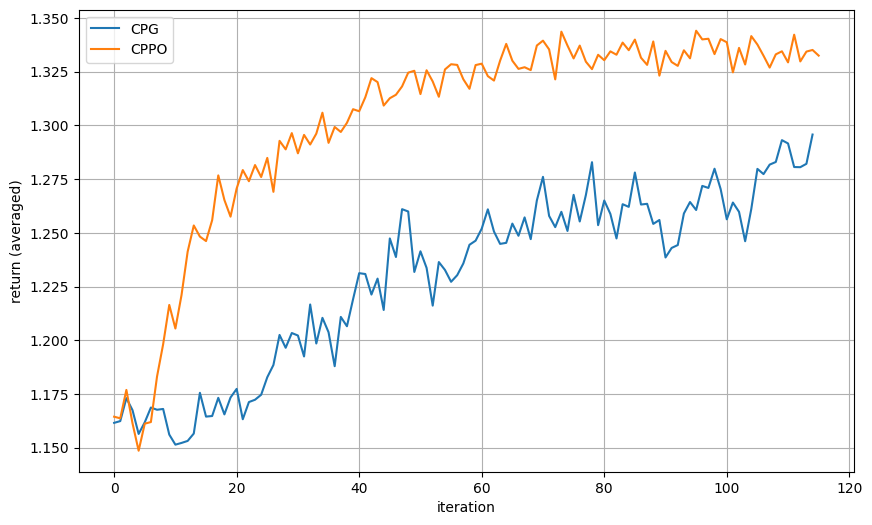}
     \caption{Performance of both algorithms to the task}\label{Fig:Optimal pair trading}
\end{figure}

Figure \ref{Fig:Optimal pair trading} shows that both algorithms, CPG and CPPO, converge to a local optimum
(different between the two), and with an overall increasing trend over iterations. 
(Averaging is taken over $100$ Monte Carlo estimates for each policy evaluation.)

\section{Conclusion and Further Works}
\label{sc5}
We have developed in this paper the basic theoretical framework for policy optimization in continuous RL, and illustrated its potential applications using numerical experiments.
\vskip 7pt
For further research, two topics are high on our agenda. First, we plan to study the convergence (rate) of the continuous policy gradient and TRPO/PPO, vis-a-vis the error due to the time increment $\delta_t$. Our conjecture is that it is likely to be polynomial-bounded under mild assumptions, similar to the analysis in \cite{jia2022policy_evaluation,guo2021reinforcement}), thus extending beyond the condition required by \cite{zhang2020global} and \cite{wang2019neural,liu2023neural}. Second, for the bounds on the statistical distance and the performance difference, we want to further develop a consistent bound like the one in \cite{dai2021refined} (for the discrete setting), i.e., one that remains meaningful when the discount factor $\beta\rightarrow 0$.

\section*{Ackbowledgements}
Wenpin Tang gratefully acknowledges financial support through NSF grants DMS-2113779 and DMS-2206038, and through a start-up grant at Columbia University. The works of Hanyang Zhao and David Yao are part of a Columbia-CityU/HK collaborative project that is supported by InnotHK Initiative, The Government of the HKSAR and the AIFT Lab. We also thank Yanwei Jia and Xunyu Zhou for helpful discussions. 
\newpage

\bibliographystyle{myplainnat}
\bibliography{Continuous_PO}{}

\newpage

\begin{appendices}
\section{ Continuous RL: Formulation and Well-Posedness}
\label{Problem well-posedness}
\subsection{Exploratory Stochastic-Control}

For $n, m$ positive integers,
let $b:\mathbb{R}^n \times \mathcal{A} \mapsto \mathbb{R}^n$ and $\sigma:\mathbb{R}^n \times \mathcal{A} \mapsto \mathbb{R}^{n \times m}$ be given functions,
where $\mathcal{A}$ is a compact action space. 
A classical stochastic control problem \cite{fleming2006controlled,yong1999stochastic}
 is to control the state (or feature) dynamics governed by an It\^{o} process, 
defined on a filtered probability space $\left(\Omega, \mathcal{F}, \mathbb{P};\left\{\mathcal{F}_s^B\right\}_{s \geq 0}\right)$,
along with an $\{\mathcal{F}^B_s\}$-Brownian motion $B=\left\{B_s, s \geq 0\right\}$:
\begin{equation}
\label{SDE_dynamics_appendix}
\mathrm{d} X_s^a=b\left(X_s^a, a_s\right) \mathrm{d} s+\sigma\left(X_s^a, a_s\right) \mathrm{d} B_s,\, s \geq t,
\quad X_t = x,
\end{equation}
where $a_s$ is the agent's action (control) at time $s$.
The goal of the stochastic control (discounted objective over an infinite time horizon)
is for any time-state pair $(t,x)$ in (\ref{SDE_dynamics_appendix}), 
to find the optimal $\left\{\mathcal{F}_s^B\right\}_{s \geq 0}$-progressively measurable sequence of actions $a=\{a_s, s \geq t\}$
(called the optimal policy) that maximizes the expected total $\beta$-discounted reward:
\begin{equation}
\label{Discounted Objective 1_appendix}
\mathbb{E}\left[\int_t^{+\infty} e^{-\beta (s-t)} r\left(X_s^a, a_s\right) \mathrm{d} s\mid X_t^a=x\right],
\end{equation}
where $r:\mathbb{R}^n \times \mathcal{A} \mapsto \mathbb{R}$ is the running reward of the current state and action $(X^a_s, a_s)$,
and $\beta > 0$ is a discount factor that measures the time-depreciation of the objective value (or the impatience level of the agent).
Note that the state process $X^a=\left\{X_s^a, s \geq t\right\}$ depends on the starting (initial) time-state pair  $(t,x)$.
For ease of notation, we denote by $X^a$ instead of $X^{t,x, a}=\left\{X_s^{t,x, a}, s \geq t\right\}$
the solution to the SDE in \eqref{SDE_dynamics_appendix} when there is no ambiguity.
\vskip 7pt

Listed below are the standard assumptions to ensure the well-posedness of the stochastic control problem in
\eqref{SDE_dynamics_appendix}-\eqref{Discounted Objective 1_appendix}.

\begin{aspt}
The following conditions are assumed throughout:\\
(i) $b, \sigma, r$ are all continuous functions in their respective arguments;\\
(ii) b, $\sigma$ are uniformly Lipschitz continuous in $x$, i.e., there exists a constant $C>0$ such that for $\varphi \in\{b, \sigma\}$,
\begin{equation}
\left\|\varphi(x, a)-\varphi\left(x^{\prime}, a\right)\right\|_2 \leq C\left\|x-x^{\prime}\right\|_2, \quad \mbox{for all } a\in \mathcal{A}, \,  \,x, x^{\prime} \in \mathbb{R}^n ;
\end{equation}
(iii) $b, \sigma$ have linear growth in $x$ and $a$, i.e., there exists a constant $C>0$ such that for $\varphi \in\{b, \sigma\}$,
\begin{equation}
\|\varphi(x, a)\|_2 \leq C(1+\|x\|_2+\left\|a\right\|_2), \quad \mbox{for all } (x, a) \in \mathbb{R}^n \times \mathcal{A} ;
\end{equation}
(iv) $r$ has polynomial growth in $x$ and $a$, i.e., there exists a constant $C>0$ and $\mu \geq 1$ such that
\begin{equation}
|r(x, a)| \leq C\left(1+\|x\|_2^\mu+\|a\|_2^\mu\right)\quad \mbox{for all } (x, a) \in \mathbb{R}^n \times \mathcal{A}.
\end{equation}
\end{aspt}

The key idea underlying {\it exploratory} stochastic control is to use a randomized policy (or relaxed control),
i.e., apply a probability distribution to the admissible action space.
To do so, let's assume the probability space is rich enough to support a uniform random variable $Z$ that is independent of the Brownian motion 
$B=\left\{B_t\right\}$.
We then expand the original filtered probability space to $\left(\Omega, \mathcal{F}, \mathbb{P} ;\left\{\mathcal{F}_s\right\}_{s \geq 0}\right)$,
where $\mathcal{F}_s = \mathcal{F}_s^B \vee \sigma(Z)$ (i.e., augment $\mathcal{F}_s^B$ with the sigma field generated by $Z$).
\vskip 7pt
Let $\pi: \mathbb{R}^n \ni x \mapsto \pi(\cdot \mid x) \in \mathcal{P}(\mathcal{A})$
be a stationary feedback policy given the state at $x$,
where $\mathcal{P}(\mathcal{A})$ is a suitable collection of probability distributions (with density functions). 
At each time $s$, an action $a_s$ is generated from the distribution $\pi\left(\cdot \mid X^a_s\right)$, i.e. the policy only depends on the current state.
In other words, we only consider stationary, or time-independent feedback control policies 
for the stochastic control problem \eqref{SDE_dynamics_appendix}-\eqref{Discounted Objective 1_appendix}.

Given a stationary policy $\pi \in \mathcal{P}(\mathcal{A})$, an initial state $x$, and an $\left\{\mathcal{F}_s\right\}$-progressively measurable action process $a^\pi=\left\{a_s^\pi, s \geq 0 \right\}$ generated from $\pi$, the state process $X^\pi=\left\{X_s^\pi, s \geq 0\right\}$ follows:
\begin{equation}
\label{SDE_dynamics_policy_appendix}
\mathrm{d} X_s^\pi=b\left(X_s^\pi, a_s^\pi\right) \mathrm{d} s+\sigma\left(X_s^\pi, a_s^\pi\right) \mathrm{d} B_s,\, s \geq t, \quad X_0^\pi=x,
\end{equation}
defined on $\left(\Omega, \mathcal{F}, \mathbb{P} ;\left\{\mathcal{F}_s\right\}_{s \geq 0}\right)$. 
It is easy to see that the dynamics in \eqref{SDE_dynamics_policy_appendix} define a time-homogeneous Markov process,
such that for each $t\geq 0$ and $x$:
\begin{equation*}
\left(X_s^\pi\mid X_0^\pi=x\right) \stackrel{d}{=}\left(X_{s+t}^\pi\mid X_t^\pi=x\right),\,s\geq 0 .
\end{equation*}
Consequently, the objective in  \eqref{Discounted Objective 1_appendix} is independent of time $t$, and is equal to:
\begin{equation}
\label{Discounted Objective 2_appendix}
\mathbb{E}\left[\int_0^{+\infty} e^{-\beta s} r\left(X_s^\pi, a^\pi_s\right) \mathrm{d} s\mid X_0^\pi=x\right].
\end{equation}

Furthermore, following \citep{wang2020reinforcement}, 
we can add a regularizer to the objective function to encourage exploration (represented by the randomized policy), leading to
\begin{equation}
\label{Discounted Objective Policy_appendix}
\begin{aligned}
V(t,x ; \pi):=& \mathbb{E}\left[\int_t^{\infty} e^{-\beta(s-t)}\left[r\left(X_s^\pi, a_s^\pi\right)+\gamma p\left(X_s^\pi, a_s^\pi, \pi\left(\cdot \mid X_s^\pi\right) ,\right)\right] \mathrm{d} s\mid X_t^\pi=x\right],
\end{aligned}
\end{equation}
where 
$p: \mathbb{R}^n \times \mathcal{A} \times \mathcal{P}(\mathcal{A}) \mapsto \mathbb{R}$ is the regularizer,
and $\gamma \geq 0$ is a weight parameter on exploration (also known as the ``temperature" parameter).
For instance, in \cite{wang2020reinforcement}, $p$ is taken as the differential entropy,
\begin{equation*}
p(x, a, \pi(\cdot)) :=-\log \pi(a),
\end{equation*}
and hence, the ``entropy'' regularizer. 
The same argument as before justifies that $V(t,x; \pi)$ is independent of time $t$.
That is, for all $t\ge 0$,
\begin{equation}
\label{Discounted Objective Stationary Policy_appendix}
\begin{aligned}
V(t,x; \pi) \equiv V(x; \pi):= \mathbb{E}^{\mathbb{P}}\left[\int_0^{\infty} e^{-\beta s}\left[r\left(X_s^\pi, a_s^\pi\right)+\gamma p\left(X_s^\pi, a_s^\pi, \pi\left(\cdot \mid X_s^\pi\right)\right)\right] \mathrm{d} s\mid X_0^\pi=x\right];
\end{aligned}
\end{equation}
which is the state-value function under the policy $\pi$, $V(x ; \pi)$, in
\eqref{Value function Definition}, and which, in turn, leads to the performance function $\eta(\pi)$ in
\eqref{performancemet}.
Moreover, recall the main task of the continuous RL is to find (or approximate)
$\eta^*=\max _{\pi}\,\, \eta(\pi),$
where $\max$ is over all admissible policies.

\subsection{Controlled SDE and the HJ Equation}
Note that the exploratory state dynamics in \eqref{SDE_dynamics_policy_appendix} is governed by a general It\^o process.
It is sometimes more convenient to consider an equivalent SDE representation---
in the sense that its (weak) solution has the same distribution as the It\^o process in \eqref{SDE_dynamics_policy_appendix} 
at each fixed time $t$. It is known
(\cite{wang2020reinforcement}) that 
when $n = m = 1$, 
the marginal distribution of $\left\{X_s^\pi, s\geq 0 \right\}$ agrees with that of the solution to the SDE, denoted by $\{\tilde{X}_s, s \ge 0\}$:
\begin{equation*}
\mathrm{d} \tilde{X}_s=\tilde{b}\left(\tilde{X}_s, \pi\left(\cdot \mid  \tilde{X}_s\right)\right) \mathrm{d} s+\tilde{\sigma}\left(\tilde{X}_s, \pi\left(\cdot \mid \tilde{X}_s\right)\right) \mathrm{d} \tilde{B}_s, \quad \tilde{X}_0=x,
\end{equation*}
where
$\tilde{b}(x, \pi(\cdot))=\int_{\mathcal{A}} b(x, a) \pi(a) \mathrm{d} a$ and $\tilde{\sigma}(x, \pi(\cdot))=\sqrt{\int_{\mathcal{A}} \sigma^2(x, a) \pi(a) \mathrm{d} a}$.
This result is easily extended to arbitrary $n, m$, thanks to \cite[Corollary 3.7]{brunick2013mimicking},
with the precise statement presented below (assuming $n=m$ for ease of exposition).
\begin{thm}
\label{Ito process mimicking_appendix}
Assume that for a policy $\pi$ and for every $x$,
\begin{equation*}
\int_{\mathcal{A}} \sigma^2(x, a) \pi(a) \mathrm{d} a \in \mathbb{R}^{n\times n},
\end{equation*}
is positive definite.
Then there exists a filtered probability space $\left(\tilde{\Omega}, \tilde{\mathcal{F}},\left\{\tilde{\mathcal{F}}_t\right\}_{t \geq 0}, \tilde{\mathbb{P}}\right)$ that supports a continuous $\mathbb{R}^n$-valued adapted process $\tilde{X}$ and an $n$-dimensional Brownian motion $\tilde{B}$ satisfying
\begin{equation}
\label{SDE_Dynamics_exp_appendix}
\mathrm{d} \tilde{X}_s=\tilde{b}\left(\tilde{X}_s, \pi\left(\cdot \mid  \tilde{X}_s\right)\right) \mathrm{d} s+\tilde{\sigma}\left(\tilde{X}_s, \pi\left(\cdot \mid \tilde{X}_s\right)\right) \mathrm{d} \tilde{B}_s, \quad \tilde{X}_0=x,
\end{equation}
where 
\begin{equation*}
\tilde{b}(x, \pi(\cdot))=\int_{\mathcal{A}} b(x, a) \pi(a) \mathrm{d} a,\quad \tilde{\sigma}(x, \pi(\cdot))=\left(\int_{\mathcal{A}} \sigma^2(x, a) \pi(a) \mathrm{d} a\right)^{\frac{1}{2}}.
\end{equation*}
For each $s \geq 0$, the distribution of $\tilde{X}_s$ under $\tilde{\mathbb{P}}$ agrees with that of $X^\pi_s$ under $\mathbb{P}$ defined in (\ref{SDE_dynamics_policy_appendix}).
\end{thm}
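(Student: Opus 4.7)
\textit{Proof proposal.} The plan is to reduce the statement to the mimicking theorem of Brunick--Shreve~\cite{brunick2013mimicking} (Corollary 3.7), which generalizes Gy\"ongy's classical result on matching one-dimensional marginals of It\^o processes by solutions of Markovian SDEs. The original state process $X^\pi$ on $(\Omega,\mathcal{F},\mathbb{P};\{\mathcal{F}_s\}_{s\ge 0})$ from \eqref{SDE_dynamics_policy_appendix} is an It\^o process with drift $\mu_s := b(X^\pi_s, a^\pi_s)$ and diffusion matrix $\Sigma_s := \sigma(X^\pi_s, a^\pi_s)\sigma(X^\pi_s, a^\pi_s)^{\top}$, both progressively measurable, and under the standing assumptions on $b,\sigma$ (linear growth and Lipschitz continuity) and compactness of $\mathcal{A}$, these coefficients satisfy the integrability hypotheses required by~\cite{brunick2013mimicking}.

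First, I would identify the conditional coefficients at the level of the state. Because $a^\pi_s$ is generated from $\pi(\cdot\mid X^\pi_s)$ via the external randomizer $Z$ independent of the Brownian motion, one has the conditional law $\mathcal{L}(a^\pi_s \mid X^\pi_s = x) = \pi(\cdot\mid x)$, so that
\begin{equation*}
\mathbb{E}\!\left[\mu_s \,\big|\, X^\pi_s = x\right] = \int_{\mathcal{A}} b(x,a)\,\pi(a\mid x)\,\mathrm{d}a = \tilde{b}(x,\pi(\cdot\mid x)),
\end{equation*}
\begin{equation*}
\mathbb{E}\!\left[\Sigma_s \,\big|\, X^\pi_s = x\right] = \int_{\mathcal{A}} \sigma(x,a)\sigma(x,a)^{\top}\pi(a\mid x)\,\mathrm{d}a = \tilde{\sigma}^2(x,\pi(\cdot\mid x)).
\end{equation*}
A key observation is that these conditional expectations depend only on $x$ (and not on $s$) because $\pi$ is a stationary feedback policy; this matches the Markovian/time-homogeneous form required for the mimicking SDE in \eqref{SDE_Dynamics_exp_appendix}.

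Next, I would invoke~\cite[Corollary 3.7]{brunick2013mimicking}: there exists a filtered probability space $(\tilde{\Omega},\tilde{\mathcal{F}},\{\tilde{\mathcal{F}}_t\}_{t\ge 0},\tilde{\mathbb{P}})$ supporting a continuous adapted $\mathbb{R}^n$-valued process $\tilde{X}$ and an $n$-dimensional Brownian motion $\tilde{B}$, such that
\begin{equation*}
\mathrm{d}\tilde{X}_s = \tilde{b}(\tilde{X}_s,\pi(\cdot\mid\tilde{X}_s))\,\mathrm{d}s + \hat{\Sigma}(\tilde{X}_s)^{1/2}\,\mathrm{d}\tilde{B}_s,\quad \tilde{X}_0 = x,
\end{equation*}
with $\hat{\Sigma}(x) = \tilde{\sigma}^2(x,\pi(\cdot\mid x))$, and the one-dimensional marginals of $\tilde{X}_s$ under $\tilde{\mathbb{P}}$ coincide with those of $X^\pi_s$ under $\mathbb{P}$ for every $s\ge 0$. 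The positive-definiteness hypothesis on $\int_{\mathcal{A}}\sigma^2(x,a)\pi(a)\mathrm{d}a$ is exactly what is needed to define the unique symmetric positive-definite square root $\tilde{\sigma}(x,\pi(\cdot)) = \hat{\Sigma}(x)^{1/2}$, thereby producing the coefficient appearing in \eqref{SDE_Dynamics_exp_appendix}.

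The main obstacle, and the point where care is needed, is in verifying the hypotheses of~\cite{brunick2013mimicking}: namely, the measurability of the conditioned drift and diffusion as functions of $(s,x)$, and the integrability conditions $\mathbb{E}\int_0^T(\|\mu_s\| + \|\Sigma_s\|)\mathrm{d}s < \infty$ on compact intervals, so that the conditional-expectation computation above is well-defined. Both follow from the linear-growth assumption and standard moment estimates on solutions of \eqref{SDE_dynamics_policy_appendix}, combined with measurable selection to guarantee $(x\mapsto\tilde{b}(x,\pi(\cdot\mid x)))$ and $(x\mapsto\tilde{\sigma}^2(x,\pi(\cdot\mid x)))$ are Borel measurable under mild regularity of $x\mapsto\pi(\cdot\mid x)$. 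With these checks in place the conclusion is immediate from Brunick--Shreve; note that the mimicking theorem does \emph{not} assert pathwise or joint-distribution equality with $X^\pi$, only equality of one-dimensional marginals, which is precisely the assertion of the theorem and suffices for all uses of \eqref{SDE_Dynamics_exp_appendix} in the paper (since the performance metric \eqref{performancemet} depends only on these marginals).
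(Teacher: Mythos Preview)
Your proposal is correct and follows exactly the approach the paper takes: the paper does not supply a separate proof of this theorem but simply attributes it to \cite[Corollary~3.7]{brunick2013mimicking}, stating that the one-dimensional result of \cite{wang2020reinforcement} ``is easily extended to arbitrary $n,m$'' via Brunick--Shreve. Your write-up in fact fills in more detail than the paper does---the identification of the conditional drift and covariance via the external-randomization structure, the role of positive-definiteness in extracting the square root, and the verification of the integrability hypotheses---so there is nothing to correct or compare.
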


As a consequence, the state value function in \eqref{Discounted Objective Stationary Policy_appendix} is identical to
\begin{equation*}
\begin{aligned}
V( x ; \pi)  = \mathbb{E}\left[\int_0^{\infty} e^{-\beta s} \int_{\mathcal{A}}\left[r(\tilde{X}_s, a)+\gamma p\left(\tilde{X}_s, a, \pi(\cdot \mid \tilde{X}_s)\right)\right] \pi(a \mid \tilde{X}_s) \mathrm{d} a \mathrm{~d} s\mid \tilde{X}_0=x\right].
\end{aligned}
\end{equation*}
Also define
\begin{equation*}
\tilde{r}(x,\pi)=\int_{\mathcal{A}} r(x, a) \pi(a|s) \mathrm{d} a,\quad \tilde{p}(x,\pi)=\int_{\mathcal{A}} p(x, a,\pi) \pi(a|x) \mathrm{d} a,
\end{equation*}
so we can simplify the value function to
\begin{equation}
\label{Value function Definition_appendix}
\begin{aligned}
V( x ; \pi) = \mathbb{E}\left[\int_0^{\infty} e^{-\beta s} \left[\tilde{r}(\tilde{X}_s, \pi)+\gamma \tilde{p}\left(\tilde{X}_s^\pi ,\pi(\cdot \mid \tilde{X}_s)\right)\right] \mathrm{~d} s\mid \tilde{X}_0=x\right].
\end{aligned}
\end{equation}
Following the principle of optimality, $V$ then satisfies the HJ equation:
\begin{equation}
\label{Value Function PDE_appendix}
\beta V(x;\pi)-\tilde{b}(x, \pi) \cdot \nabla V(x;\pi)-\frac{1}{2} \tilde{\sigma}^2(x, \pi)\circ \nabla^2 V(x;\pi)-\tilde{r}(x, \pi)
-\gamma \tilde{p}(x,\pi)=0.
\end{equation}

To guarantee that the HJ equation in \eqref{Value Function PDE_appendix}
characterizes the state-value function in \eqref{Value function Definition_appendix}, we need

\begin{aspt}
\label{assump:HJB}
Assume the following conditions hold:\\
(i) $b, \sigma, r, p$ are all continuous functions in their respective arguments. \\
(ii) $b, r, p$ are uniformly Lipschitz continuous in $x$, i.e., there exists a constant $C>0$ such that for $\varphi \in\{b, r\}$, 
\begin{equation*}
\left\|\varphi(x, a)-\varphi\left(x^{\prime}, a\right)\right\|_2 \leq C\left\|x-x^{\prime}\right\|_2, \quad \mbox{for all } a\in \mathcal{A}, \,  \,x, x^{\prime} \in \mathbb{R}^n,
\end{equation*}
and 
\begin{equation*}
|p(x,a,\pi) - p(x',a, \pi)| \le C\left\|x-x^{\prime}\right\|_2, \quad \mbox{for all } a\in \mathcal{A}, \,  \pi \in \mathcal{P}(\mathcal{A}), \,x, x^{\prime} \in \mathbb{R}^n.
\end{equation*}
(iii) $\tilde{\sigma}$ is globally bounded, i.e., there exist $0 < \sigma_0 < \bar{\sigma}_0$ such that
\begin{equation*}
\sigma_0^2 \cdot I \le \tilde{\sigma}^2 (x,a)\le \bar{\sigma}_0^2 \cdot I, \quad \mbox{for all } a\in \mathcal{A}, \,  \,x \in \mathbb{R}^n.
\end{equation*}
(iv) the SDE \eqref{SDE_Dynamics_exp_appendix} has a weak solution which is unique in distribution. \\
(v) $\pi(a|x)$ is measurable in $(x,a)$ and is uniformly Lipschitz continuous in $x$, i.e., there exists a constant $C > 0$ such that
\begin{equation*}
\int _\mathcal{A} |\pi(a|x) - \pi(a|x')| \, da \le C \| x - x'\|_2, \quad \mbox{for all } x,x'\in \mathbb{R}^n.
\end{equation*}
\end{aspt}

\begin{thm}
Under Assumption \ref{assump:HJB}, the state-value function in \eqref{Value function Definition_appendix} is 
the unique (subquadratic) viscosity solution to the HJ equation in \eqref{Value Function PDE_appendix}.
\end{thm}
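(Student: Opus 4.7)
The plan is a standard viscosity-solution two-stage argument: first establish that $V(\cdot;\pi)$ defined in \eqref{Value function Definition_appendix} is a viscosity solution of \eqref{Value Function PDE_appendix}, then prove uniqueness via a comparison principle within the class of subquadratic viscosity solutions.

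\textbf{Existence.} I would begin by verifying that $V$ is well-defined with at most subquadratic growth. Under Assumption \ref{assump:HJB}(ii)--(iii), the exploratory coefficients $\tilde{b}(x,\pi)$ and $\tilde{\sigma}(x,\pi)$ inherit Lipschitz/boundedness properties from $b,\sigma$ (also using Assumption \ref{assump:HJB}(v) to control the $x$-dependence of $\pi(\cdot\mid x)$). Standard Gr\"onwall-type moment estimates for \eqref{SDE_Dynamics_exp_appendix} then give $\mathbb{E}[\|\tilde{X}_s\|_2^2] \le K(1+\|x\|_2^2)e^{Ks}$ for some $K>0$; combined with Lipschitz growth of $\tilde{r},\tilde{p}$ and the discount $e^{-\beta s}$, this yields $V(x;\pi)=O(1+\|x\|_2^2)$. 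I would next establish the dynamic programming principle: for every bounded stopping time $\tau$,
\begin{equation*}
V(x;\pi) = \mathbb{E}\left[\int_0^{\tau} e^{-\beta s}\bigl(\tilde{r}(\tilde{X}_s,\pi)+\gamma \tilde{p}(\tilde{X}_s,\pi)\bigr)\,\mathrm{d}s + e^{-\beta \tau}V(\tilde{X}_\tau;\pi)\right],
\end{equation*}
which follows from the strong Markov property of $\tilde{X}$ together with Fubini. The viscosity sub/super\-solution properties are then derived in the usual way: for $\varphi\in C^2$ touching $V$ from above (resp.\ below) at $x_0$, apply It\^o's formula to $e^{-\beta s}\varphi(\tilde{X}_s)$ up to the exit time $\tau_h$ from a small ball, combine with the DPP inequality, divide by $h$, and pass to $h\to 0^+$ to obtain the pointwise PDE inequality at $x_0$.

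\textbf{Uniqueness.} The key step is a comparison principle on $\mathbb{R}^n$: any subquadratic viscosity subsolution $u$ and supersolution $v$ of \eqref{Value Function PDE_appendix} must satisfy $u\le v$. The coercive zeroth-order term $\beta>0$, the uniform ellipticity of $\tilde{\sigma}^2$ (Assumption \ref{assump:HJB}(iii)), and the uniform Lipschitz continuity of $\tilde{b}$ in $x$ provide the structural ingredients for the Crandall--Ishii--Lions machinery. To handle the unbounded domain, I would double the variables and introduce the auxiliary function
\begin{equation*}
\Phi_{\alpha,\varepsilon}(x,y) := u(x)-v(y) - \tfrac{\alpha}{2}\|x-y\|_2^2 - \varepsilon\bigl(1+\|x\|_2^2+\|y\|_2^2\bigr),
\end{equation*}
whose supremum is attained because the $\varepsilon$-penalty dominates the subquadratic growth of $u,v$. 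Applying Ishii's lemma at the maximizer produces matching second-order jets for $u$ and $v$; the Lipschitz/ellipticity structure closes the resulting estimate and contradicts any hypothetical positive supremum of $u-v$. Sending $\alpha\to\infty$ and then $\varepsilon\to 0^+$ yields $\sup(u-v)\le 0$, so $u\le v$. Applied both ways to two candidate solutions, this gives equality, hence uniqueness.

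\textbf{Main obstacle.} The principal difficulty is precisely this unbounded-domain comparison: the subquadratic growth hypothesis is essential for $\Phi_{\alpha,\varepsilon}$ to attain a maximum, and one must show that the error contributions from the $\varepsilon$-penalty (extra first- and second-order derivatives hitting the test function) vanish uniformly as $\varepsilon\to 0^+$. This step relies crucially on the uniform boundedness of $\tilde{\sigma}$ and the linear growth/Lipschitz structure of $\tilde{b}$, both of which are secured by Assumption \ref{assump:HJB}; the existence half, by contrast, is largely mechanical once the moment bounds and DPP are in place.
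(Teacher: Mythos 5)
Your outline is the classical self-contained route, whereas the paper does not argue this theorem directly at all: its proof is two citations, invoking \cite[Section 3.1]{tang2022exploratory} for existence and uniqueness of the subquadratic viscosity solution of the exploratory HJ equation, and \cite[Lemma 2]{jia2022policy_gradient} to identify that solution with the state-value function. Your ``existence'' half (moment bounds, dynamic programming principle via the strong Markov property, It\^o's formula against a $C^2$ test function touching $V$) is essentially what the Jia--Zhou lemma packages, and your ``uniqueness'' half (doubling of variables with a quadratic penalty, Ishii's lemma, Crandall--Ishii--Lions comparison on $\mathbb{R}^n$ in the subquadratic class) is what the Tang--Zhang--Zhou analysis provides; note also that for a fixed policy the equation is linear in $V$, which simplifies the comparison argument relative to a genuine HJB. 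What your approach buys is a self-contained proof that makes the structural hypotheses visible; what the paper's approach buys is brevity and the transfer of all technical burden to references whose standing assumptions are tailored to exactly this equation.

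One caveat you should make explicit rather than assert: Assumption \ref{assump:HJB} makes $\tilde{\sigma}$ bounded but only gives $b$ (hence $\tilde{b}$) uniform Lipschitz continuity in $x$, so $\tilde{b}$ and the rewards have linear growth. Your claims that $V(x;\pi)=O(1+\|x\|_2^2)$ ``combined with the discount'' and that the $\varepsilon$-penalty terms in the comparison argument can be absorbed both require $\beta$ to dominate the relevant Gr\"onwall/linear-growth constants (otherwise $V$ can even be infinite, e.g.\ drift $x$, reward $|x|$, small $\beta$). The cited references secure this through their own hypotheses; in a from-scratch proof you must either add such a largeness condition on $\beta$ (or boundedness of the data) or explain how it follows from the assumptions as stated. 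With that proviso spelled out, your two-stage plan is sound and matches, step for step, what the outsourced results actually prove.
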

\begin{proof}
By \cite[Section 3.1]{tang2022exploratory}, the HJ equation in \eqref{Value Function PDE_appendix} has a unique (subquadratic) viscosity solution under the conditions (i)-(iii).
Further by \cite[Lemma 2]{jia2022policy_gradient}, the viscosity solution is the state-value function. 
\end{proof}

\section{Proofs of Main Results (in \S\ref{sc3})}
\label{Appendix: Proof of Performance difference bound}

\subsection{Proof of Theorem \ref{thm:PDF_cont}}
\label{Appendix:Proof of performance difference formula}

Recall in the proof sketch of the Theorem in \S\ref{sc3}, we have defined the operator 
$\mathcal{L}^{\pi}:C^2(\mathbb{R}^n)\mapsto C(\mathbb{R}^n)$ as 
\begin{equation*}
\begin{aligned}
\left(\mathcal{L}^{\pi} \varphi\right)(x)&:=-\beta \varphi(x) +\tilde{b}(x, \pi) \cdot \nabla \varphi(x)+\frac{1}{2} \tilde{\sigma}(x, \pi)^2\circ \nabla^2 \varphi(x),
\end{aligned}
\end{equation*}
which leads to the following characterization of 
the HJ equation:
\begin{equation}
\label{Value Function Operator Equation}
-\mathcal{L}^{\pi} V(x;\pi)=\tilde{r}(x, \pi)+\gamma \tilde{p}(x, \pi).
\end{equation}
We need the following two lemmas concerning the operator $\mathcal{L}^{\pi}$.

\begin{lem} For any $\varphi\in C^2(\mathbb{R}^n)$, we have
\label{Lem: d times L}
\begin{equation*}
    \int_{\mathbb{R}^n} d^{\pi}_{x}(y) (-\mathcal{L}^{\pi} \varphi)(y)\mathrm{d}y=\varphi(x).
\end{equation*}
\end{lem}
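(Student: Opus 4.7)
The plan is to identify $\mathcal{L}^{\pi}$ with the \emph{discounted} infinitesimal generator of the diffusion $\tilde{X}^{\pi}$ in \eqref{SDE_Dynamics_exp_appendix}, so that the claim reduces to a standard resolvent identity. Concretely, write $\mathcal{L}^{\pi} = \mathcal{G}^{\pi} - \beta I$, where
\begin{equation*}
\mathcal{G}^{\pi}\varphi(y) := \tilde{b}(y,\pi)\cdot\nabla\varphi(y) + \tfrac{1}{2}\tilde{\sigma}^{2}(y,\pi)\circ\nabla^{2}\varphi(y)
\end{equation*}
is the generator of $\tilde{X}^{\pi}$ under the SDE \eqref{SDE_Dynamics_exp_appendix}, and the initial law is taken to be $\delta_{x}$ in accordance with the notation $d_{x}^{\pi}$ appearing in the lemma.

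First I would apply It\^o's formula to $s\mapsto e^{-\beta s}\varphi(\tilde{X}_{s}^{\pi})$, which gives
\begin{equation*}
d\bigl(e^{-\beta s}\varphi(\tilde{X}_{s}^{\pi})\bigr) = e^{-\beta s}(\mathcal{L}^{\pi}\varphi)(\tilde{X}_{s}^{\pi})\,ds + e^{-\beta s}\,\nabla\varphi(\tilde{X}_{s}^{\pi})^{\top}\tilde{\sigma}(\tilde{X}_{s}^{\pi},\pi)\,d\tilde{B}_{s},
\end{equation*}
because the $-\beta\varphi$ contribution from differentiating $e^{-\beta s}$ is exactly what distinguishes $\mathcal{L}^{\pi}$ from $\mathcal{G}^{\pi}$. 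Integrating from $0$ to $T$, taking expectations (so the stochastic integral, assumed to be a true martingale under the regularity imposed in the well-posedness appendix, drops out), and rearranging yields
\begin{equation*}
\varphi(x) - \mathbb{E}\bigl[e^{-\beta T}\varphi(\tilde{X}_{T}^{\pi})\bigr] = \mathbb{E}\int_{0}^{T} e^{-\beta s}(-\mathcal{L}^{\pi}\varphi)(\tilde{X}_{s}^{\pi})\,ds.
\end{equation*}

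Next I would let $T\to\infty$. The boundary term $e^{-\beta T}\mathbb{E}[\varphi(\tilde{X}_{T}^{\pi})]$ vanishes because the exponential discount dominates the at-most polynomial growth of $\mathbb{E}_{x}[\varphi(\tilde{X}_{T}^{\pi})]$ in $T$, which follows from the Lipschitz/linear-growth conditions on $\tilde{b},\tilde{\sigma}$ in the well-posedness Assumption (together with any standing growth condition on $\varphi$ making both sides of the claim meaningful). Passing to the limit and invoking Fubini to swap $\mathbb{E}$ and the time integral,
\begin{equation*}
\varphi(x) = \mathbb{E}\int_{0}^{\infty} e^{-\beta s}(-\mathcal{L}^{\pi}\varphi)(\tilde{X}_{s}^{\pi})\,ds.
\end{equation*}
Finally, applying Lemma \ref{lemma:density of occupation} to the measurable function $-\mathcal{L}^{\pi}\varphi$ (with initial law $\mu=\delta_{x}$) rewrites the right-hand side as $\int_{\mathbb{R}^{n}} d_{x}^{\pi}(y)(-\mathcal{L}^{\pi}\varphi)(y)\,dy$, which is the desired identity.

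The main obstacle is justifying the two limiting/integrability steps: (i) the It\^o stochastic integral being a true (rather than merely local) martingale, and (ii) the decay of $e^{-\beta T}\mathbb{E}_{x}[\varphi(\tilde{X}_{T}^{\pi})]$. Both are standard: one localizes by the exit times $\tau_{R}$ from balls of radius $R$, uses the linear-growth bounds on $\tilde{b},\tilde{\sigma}$ to control $\sup_{s\le T}\mathbb{E}\|\tilde{X}_{s}^{\pi}\|^{2}$ by a polynomial in $T$, and then lets $R\to\infty$ followed by $T\to\infty$, the discount $e^{-\beta T}$ absorbing the polynomial factor. No tool beyond classical It\^o calculus, dominated convergence, and Fubini is needed.
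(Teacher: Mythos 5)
Your proposal is correct and follows essentially the same route as the paper's proof: It\^o's formula applied to $e^{-\beta s}\varphi(\tilde{X}^{\pi}_s)$, the vanishing of the martingale term under expectation, the decay of the discounted boundary term as $T\to\infty$, and Lemma \ref{lemma:density of occupation} to pass between the expected discounted time integral and the spatial integral against $d^{\pi}_{x}$. You merely run the argument in the opposite direction (from It\^o's identity toward the occupation-measure form, rather than starting from the left-hand side) and spell out the localization/integrability details that the paper leaves implicit.
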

\noindent \textit{Proof.} The left hand side of the above equation is
\begin{eqnarray*}
&=& \mathbb{E}\int_{0}^{\infty} e^{-\beta s}\left(\beta \varphi(\tilde{X}^{\pi}_s) -\tilde{b}(\tilde{X}^{\pi}_s, \pi)  \frac{\partial\varphi}{\partial x}(\tilde{X}^{\pi}_s)-\frac{1}{2} \tilde{\sigma}(\tilde{X}^{\pi}_s, \pi)^2 \frac{\partial^2\varphi}{\partial x^2}(\tilde{X}^{\pi}_s)\right)\mathrm{d}s\\
&=& \mathbb{E}\int_{0}^{\infty} e^{-\beta s}\left[\left(\beta \varphi(\tilde{X}^{\pi}_s) -\tilde{b}(\tilde{X}^{\pi}_s, \pi)  \frac{\partial\varphi}{\partial x}(\tilde{X}^{\pi}_s)-\frac{1}{2} \tilde{\sigma}(\tilde{X}^{\pi}_s, \pi)^2 \frac{\partial^2\varphi}{\partial x^2}(\tilde{X}^{\pi}_s)\right)\mathrm{d}s-\tilde{\sigma}(\tilde{X}^{\pi}_s, \pi)\frac{\partial\varphi}{\partial x}(\tilde{X}^{\pi}_s)\mathrm{d}B_s\right]\\
&=& \mathbb{E}\int_{0}^{\infty}\mathrm{d}\left(-e^{-\beta s}\varphi(\tilde{X}^{\pi}_s)\right)\\
&=& \lim_{s\rightarrow\infty}\left(-e^{-\beta s}\varphi(\tilde{X}^{\pi}_s)\right)+\varphi(\tilde{X}^{\pi}_0)\\
&=& \varphi(x),
\end{eqnarray*}
where the first equality follows from the definition of the occupation time and the third equality from It\^o's formula. \hfill $\square$

\begin{lem} 
Let $\pi, \hat{\pi}$ be two feedback policies.
We have 
\label{Lem:Equality}
\begin{equation}
\label{Equality}
    (\mathcal{L}^{\hat{\pi}}-\mathcal{L}^{\pi}) V(x;\pi)+\tilde{r}(x, \hat{\pi})-\tilde{r}(x, \pi)-\gamma \tilde{p}(x, \pi)=\int_{\mathcal{A}(x)}\hat{\pi}(a\mid x)q(x,a;\pi)\mathrm{d}a.
\end{equation}
\end{lem}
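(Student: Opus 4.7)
The plan is to prove the identity by unpacking the definitions of $q$, $\mathcal{L}^{\hat\pi}$, and $\mathcal{L}^\pi$, and then invoking the Hamilton--Jacobi characterization of $V(\cdot;\pi)$. Since all terms in \eqref{Equality} are pointwise expressions at the single state $x$, there is no need for any PDE or stochastic analysis beyond what is already in \eqref{Value Function Operator Equation}; the proof is essentially a linearity calculation.

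First, I would compute the right-hand side of \eqref{Equality}. From the definition $q(x,a;\pi)=\mathcal{H}^a(x,\nabla V(x;\pi),\nabla^2 V(x;\pi))-\beta V(x;\pi)$ with $\mathcal{H}^a(x,y,A)=b(x,a)\cdot y+\tfrac12\sigma^2(x,a)\circ A+r(x,a)$, integrating against $\hat\pi(\cdot\mid x)$ and using the definitions $\tilde b(x,\hat\pi)=\int b(x,a)\hat\pi(a\mid x)\,\mathrm{d}a$, $\tilde\sigma^2(x,\hat\pi)=\int \sigma^2(x,a)\hat\pi(a\mid x)\,\mathrm{d}a$, and $\tilde r(x,\hat\pi)=\int r(x,a)\hat\pi(a\mid x)\,\mathrm{d}a$ gives
\begin{equation*}
\int_{\mathcal{A}}\hat\pi(a\mid x)\,q(x,a;\pi)\,\mathrm{d}a
\;=\;\tilde b(x,\hat\pi)\cdot\nabla V(x;\pi)+\tfrac12\tilde\sigma^2(x,\hat\pi)\circ\nabla^2 V(x;\pi)+\tilde r(x,\hat\pi)-\beta V(x;\pi),
\end{equation*}
which is exactly $\mathcal{L}^{\hat\pi}V(x;\pi)+\tilde r(x,\hat\pi)$ by the definition of $\mathcal{L}^{\hat\pi}$.

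Second, I would use \eqref{Value Function Operator Equation}, i.e.\ $\mathcal{L}^{\pi}V(x;\pi)=-\tilde r(x,\pi)-\gamma\tilde p(x,\pi)$, to rewrite the left-hand side of \eqref{Equality} as
\begin{equation*}
\mathcal{L}^{\hat\pi}V(x;\pi)-\mathcal{L}^{\pi}V(x;\pi)+\tilde r(x,\hat\pi)-\tilde r(x,\pi)-\gamma\tilde p(x,\pi)
\;=\;\mathcal{L}^{\hat\pi}V(x;\pi)+\tilde r(x,\hat\pi),
\end{equation*}
after cancellation of the $\tilde r(x,\pi)$ and $\gamma\tilde p(x,\pi)$ terms. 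Matching this to the expression obtained in the first step yields \eqref{Equality}.

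There is no serious obstacle: the only subtlety is recognizing that the Hamilton function already packages $b,\sigma^2,r$ in exactly the pattern needed so that averaging by $\hat\pi$ produces $\tilde b(x,\hat\pi)$, $\tilde\sigma^2(x,\hat\pi)$, $\tilde r(x,\hat\pi)$, and that the $\gamma\tilde p(x,\pi)$ term on the LHS exists precisely to absorb the regularizer from the HJ equation for $V(\cdot;\pi)$ (no $\hat\pi$-regularizer appears on the RHS because $q$ was defined relative to $\pi$ and carries no entropy contribution). The identity then follows immediately from Assumption~\ref{assump:HJB} guaranteeing that $V(\cdot;\pi)\in C^2$ so that $\mathcal{L}^{\hat\pi}V(\cdot;\pi)$ makes sense pointwise.
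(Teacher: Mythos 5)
Your proposal is correct and is essentially the paper's own argument: both expand $q$ via the Hamiltonian, average against $\hat{\pi}(\cdot\mid x)$ to obtain $\mathcal{L}^{\hat{\pi}}V(x;\pi)+\tilde{r}(x,\hat{\pi})$, and then use the HJ characterization $-\mathcal{L}^{\pi}V(x;\pi)=\tilde{r}(x,\pi)+\gamma\tilde{p}(x,\pi)$ to handle the $\pi$-terms. The only cosmetic difference is that the paper manipulates the right-hand side by adding and subtracting the (zero) HJ combination, whereas you reduce both sides to the common intermediate expression; the content is identical.
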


\textit{Proof.} By definition of $q(x,a;\pi)$ in (\ref{defqvalue}), we have 
\begin{eqnarray*}
\text{RHS} &=& \int_{\mathcal{A}(x)}\hat\pi(a\mid x)\left(\mathcal{H}^a\left(x, \frac{\partial V}{\partial x}\left(x ; \pi\right), \frac{\partial^2 V}{\partial x^2}\left(x ; \pi\right)\right)-\beta V\left(x ; \pi\right)\right)\mathrm{d}a\\
&=& \int_{\mathcal{A}(x)}\hat\pi(a\mid x)\left(b(x, a) \cdot \frac{\partial V}{\partial x}\left(x ; \pi\right)+\frac{1}{2} \sigma^2(x, a)\circ \frac{\partial^2 V}{\partial x^2}\left(x ; \pi\right)+r(x, a)-\beta V\left(x ; \pi\right)\right)\mathrm{d}a\\
&=& \tilde{r}(x, \hat{\pi})+\mathcal{L}^{\hat{\pi}} V^{\pi}(x)\\
&=& \tilde{r}(x, \hat{\pi})-\tilde{r}(x, \pi)-\gamma \tilde{p}(x, \pi)+\mathcal{L}^{\hat{\pi}} V^{\pi}(x)-\mathcal{L}^{\pi} V^{\pi}(x)\\
&=& \text{LHS} .
\end{eqnarray*}
\hfill$\square$

\medskip\noindent
\textit{Proof of Theorem \ref{thm:PDF_cont}}.
Note that in \eqref{eqn:PDF_cont}, the equation to be proven, 
the right hand side can be written as $\int_{\mathbb{R}} d^{\hat{\pi}}_{\mu}(y)f(x;\pi,\hat{\pi})\mathrm{d}y$, with
$$f(x;\pi,\hat{\pi}):=\int_{\mathcal{A}}\hat{\pi}(a\mid x)\left(q(x,a;\pi)+\gamma p(x,a,\hat{\pi})\right)\mathrm{d}a.$$ 
From Lemma \ref{Lem:Equality}, we have
\begin{equation}
\label{fx}
f(x;\pi,\hat{\pi})
=(\mathcal{L}^{\hat{\pi}}-\mathcal{L}^{\pi}) V(x;\pi)+\tilde{r}(x, \hat{\pi})+\gamma \tilde{p}(x, \hat{\pi})-\tilde{r}(x, \pi)-\gamma \tilde{p}(x, \pi).
\end{equation}
On the other hand, for the left hand side of \eqref{eqn:PDF_cont}, we have
\begin{equation}
\label{eta1}
 \eta(\pi)   =\int_{\mathbb{R}^n} V(y;\pi) \mu(\mathrm{d}y)=
    \int_{\mathbb{R}^n} d^{\hat{\pi}}_{\mu}(y) (-\mathcal{L}^{\hat{\pi}})V(y;\pi)\mathrm{d}y ,
\end{equation}
with the second equality following from Lemma \ref{Lem: d times L}; and 
\begin{equation}
\label{eta2}
\eta(\hat{\pi})=\int_{\mathbb{R}} d^{\hat{\pi}}_{\mu}(y) \left[\tilde{r}(y, \hat{\pi})+\gamma \tilde{p}(y, \hat{\pi})\right]\mathrm{d}y ,
\end{equation}
following the definition of the discounted expected occupation time; 
moreover, from \eqref{Value Function Operator Equation}, we have
\begin{equation}
\label{eta3}
0=\int_{\mathbb{R}} d^{\hat{\pi}}_{\mu}(y) \left[(-\mathcal{L}^{\pi}) V(y;\pi)-\tilde{r}(y, \pi)-\gamma \tilde{p}(y, \pi)\right]\mathrm{d}y.
\end{equation}
Hence, combining the last three equations (\ref{eta1},\ref{eta2},\ref{eta3}), we have 
\begin{equation}
\label{Performance difference formula with dynamics}
\eta(\hat{\pi})-\eta(\pi)=\int_{\mathbb{R}} d^{\hat{\pi}}_{\mu}(y) \left[(\mathcal{L}^{\hat{\pi}}-\mathcal{L}^{\pi}) V(y;\pi)+\tilde{r}(y, \hat{\pi})+\gamma \tilde{p}(y, \hat{\pi})-\tilde{r}(y, \pi)-\gamma \tilde{p}(y, \pi)\right]\mathrm{d}y.
\end{equation}
Thus, we have shown LHS=RHS in 
\eqref{eqn:PDF_cont}.
\hfill$\square$

\subsection{Proof of Theorem \ref{Theorem:PG_cont}}
\textit{Proof}. It suffices to show the integral version of the theorem:
\begin{equation}
\label{Appendix:PG_cont_1}
\begin{aligned}
\nabla_{\theta}\left(\eta(\pi^\theta)\right)\mid_{\theta=\theta}=\int_{\mathbb{R}^n} d^{\pi^{\theta}}_{\mu}(x) \left[\int_{\mathcal{A}}\nabla_{\theta}\pi^\theta(a\mid x)\left(q(x,a;\pi^{\theta})+\gamma p(x,a,\pi^{\theta})\right)+\right.\\
\left.\gamma\cdot \pi^\theta(a\mid x)\nabla_{\theta}p(x,a,\pi^{\theta})\mathrm{d}a\right]\mathrm{d}x.
\end{aligned}
\end{equation}
As before, we simplify notation by denoting $\eta(\pi^\theta)$ as $\eta(\theta)$ and $d^{\pi^{\theta}}$ as $d^{\theta}$.
Then, by Theorem \ref{thm:PDF_cont}), we have
\begin{equation}
\label{Appendix:PG_cont_2}
\eta(\theta+\delta\theta)-\eta(\theta)=\int_{\mathbb{R}^n} d^{\theta+\delta\theta}_{\mu}(x) \left[\int_{\mathcal{A}}\pi^{\theta+\delta\theta}(a\mid x)\left(q(x,a;\theta)+\gamma p(x,a,\theta+\delta\theta)\right)\mathrm{d}a\right]\mathrm{d}x.
\end{equation}
Denote
\begin{equation*}
f(\delta\theta)=\int_{\mathcal{A}}\pi^{\theta+\delta\theta}(a\mid x)\left(q(x,a;\theta)+\gamma p(x,a,\theta+\delta\theta)\right)\mathrm{d}a.
\end{equation*}
Note that $f(0)=0$, which follows from
\begin{equation*}
\begin{aligned}
f(0)&=\int_{\mathcal{A}}\pi^{\theta}(a\mid x)\left(q(x,a;\theta)+\gamma p(x,a,\theta)\right)\mathrm{d}a\\
&=\int_{\mathcal{A}}\pi^{\theta}(a\mid x)\left(\mathcal{H}^a(x, \frac{\partial V}{\partial x}\left(x ; \pi\right), \frac{\partial^2 V}{\partial x^2}\left(x ; \pi\right))-\beta V\left(x ; \pi\right)+\gamma p(x,a,\theta)\right)\mathrm{d}a\\
&=-\beta V(x;\pi)+\tilde{b}(x, \pi) \cdot \nabla V(x;\pi)+\frac{1}{2} \tilde{\sigma}^2(x, \pi)\circ \nabla^2 V(x;\pi)+\tilde{r}(x, \pi)+\gamma \tilde{p}(x,\pi)\\
&=0.
\end{aligned}
\end{equation*}
Thus,
\begin{equation*}
\begin{aligned}
\eta(\theta+\delta\theta)-\eta(\theta)&=\langle d^{\theta+\delta\theta}_{\mu},f(\delta\theta)\rangle\\
&=\langle d^{\theta+\delta\theta}_{\mu},f(\delta\theta)\rangle-\langle d^{\theta+\delta\theta}_{\mu},f(0)\rangle\\
&=\langle d^{\theta+\delta\theta}_{\mu},f(\delta\theta)-f(0)\rangle\\
&=\langle d^{\theta+\delta\theta}_{\mu}-d^{\theta}_{\mu},f(\delta\theta)-f(0)\rangle+\langle d^{\theta}_{\mu},f(\delta\theta)-f(0)\rangle.
\end{aligned}
\end{equation*}
Dividing both sides by $\delta\theta$ completes the proof, as the first term on the last line above is of higher order than $\delta \theta$.
\hfill $\square$

\subsection{Proofs of Lemma \ref{Lemma:Wasserstein distance bound} and Theorem \ref{Theorem:Performance difference bound}}
We need a lemma for the perturbation bounds.
\label{Appendix:Proof of Wasserstein distance bound}
\begin{lem}
\label{Lemma:Square Root Perturbation bounds}
Assume that both $\tilde{\sigma}^2(x, \hat{\pi}(\cdot))$ and $\tilde{\sigma}^2(x, \pi(\cdot))$ are positive definite and
\begin{equation*}
\tilde{\sigma}^2(x, \pi(\cdot)),\tilde{\sigma}^2(x, \hat{\pi}(\cdot))\geq \sigma_0^2\cdot I.
\end{equation*} 
where $\sigma_0>0$, then we have that the difference between the square root matrix 
is bounded by
\begin{equation*}
\|\tilde{\sigma}(x,\hat{\pi})-\tilde{\sigma}(x,\pi)\|_2\leq \frac{1}{2\sigma_0}\|\tilde{\sigma}^2(x,\hat{\pi})-\tilde{\sigma}^2(x,\pi)\|_2.
\end{equation*}
If we also assume that the upper bounds, i.e.
\begin{equation*}
\tilde{\sigma}^2(x, \pi(\cdot)),\tilde{\sigma}^2(x, \hat{\pi}(\cdot))\leq \bar{\sigma}_0^2\cdot I.
\end{equation*}
by some $\bar{\sigma}_0>\sigma_0>0$, then we have
\begin{equation*}
\|\tilde{\sigma}(x,\hat{\pi})-\tilde{\sigma}(x,\pi)\|_2\leq \frac{\bar{\sigma}_0}{ 2\sigma_0} \|\hat{\pi}-\pi\|^{\frac{1}{2}}_1.
\end{equation*}
\end{lem}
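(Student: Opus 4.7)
\textit{Proof proposal.} My plan is to treat the two bounds separately: the first is the core non-commutative perturbation estimate for the matrix square root, while the second follows by combining the first with elementary estimates on the averaged diffusion coefficient.

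\textbf{Part 1 (square-root perturbation).} Write $A=\tilde\sigma^2(x,\hat\pi)$, $B=\tilde\sigma^2(x,\pi)$, and set $X:=A^{1/2}-B^{1/2}$, $Y:=A^{1/2}+B^{1/2}$. Since $A^{1/2}$ and $B^{1/2}$ are the (unique) symmetric positive-definite square roots, $X$ is symmetric and $Y\succeq 2\sigma_0 I$. The key algebraic identity is
\begin{equation*}
XY+YX \;=\; A^{1/2}(A^{1/2}-B^{1/2})+(A^{1/2}-B^{1/2})B^{1/2}+A^{1/2}(A^{1/2}-B^{1/2})+(A^{1/2}-B^{1/2})B^{1/2} \;=\; 2(A-B),
\end{equation*}
so $X$ satisfies a Sylvester-type equation. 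The plan is to extract the spectral norm of $X$ from this identity via an eigenvector argument: pick a unit eigenvector $v$ of $X$ with $Xv=\lambda v$, $|\lambda|=\|X\|_2$. Using symmetry of $X$, $v^\top(XY+YX)v=2\lambda\,v^\top Yv$. Combining with the identity and bounding $v^\top Yv\ge 2\sigma_0$, $|v^\top(A-B)v|\le\|A-B\|_2$, one obtains $|\lambda|\le\|A-B\|_2/(2\sigma_0)$, which is the first claim. (Equivalently, one can solve the Sylvester equation by $X=\int_0^\infty e^{-tA^{1/2}}(A-B)e^{-tB^{1/2}}\,dt$ and bound $\|e^{-tA^{1/2}}\|_2,\|e^{-tB^{1/2}}\|_2\le e^{-\sigma_0 t}$; this gives the same constant.)

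\textbf{Part 2 (quantitative dependence on the policies).} The idea is to combine Part 1 with a direct estimate of $\|A-B\|_2$ in terms of $\|\hat\pi-\pi\|_1$. Writing
\begin{equation*}
A-B \;=\; \int_{\mathcal A}\sigma^2(x,a)\,\bigl(\hat\pi(a)-\pi(a)\bigr)\,da
\end{equation*}
and using $\|\sigma^2(x,a)\|_2\le\bar\sigma_0^2$ (which the hypothesis forces to hold for a.e.\ $a$, otherwise a suitably peaked $\pi$ would violate $\tilde\sigma^2(x,\pi)\preceq\bar\sigma_0^2 I$), the triangle inequality yields $\|A-B\|_2\le\bar\sigma_0^2\,\|\hat\pi-\pi\|_1$. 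Plugging into Part 1 gives a bound that is linear in $\|\hat\pi-\pi\|_1$. To upgrade this to the stated $\|\hat\pi-\pi\|_1^{1/2}$ rate, I would interpolate against the trivial uniform bound $\|A-B\|_2\le\|A\|_2+\|B\|_2\le 2\bar\sigma_0^2$, i.e.\ use $\|A-B\|_2\le\|A-B\|_2^{1/2}\cdot\|A-B\|_2^{1/2}\le\sqrt{2}\,\bar\sigma_0\,(\bar\sigma_0^2\|\hat\pi-\pi\|_1)^{1/2}$, and then apply Part 1 once more, absorbing the universal multiplicative constant.

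\textbf{Expected obstacle.} The only non-routine step is Part 1: the matrix square root is not Lipschitz in general, and naive manipulations like $(A^{1/2}+B^{1/2})(A^{1/2}-B^{1/2})=A-B$ fail because $A^{1/2}$ and $B^{1/2}$ need not commute. The Sylvester-type symmetrization $XY+YX=2(A-B)$ together with the lower bound $Y\succeq 2\sigma_0 I$ is what makes the non-commutative obstruction harmless; all downstream steps are routine norm manipulations and direct bounds on $\tilde\sigma^2$ as a functional of $\pi$.
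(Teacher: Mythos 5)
Your Part 1 is correct and is essentially the paper's own argument: the paper also takes a unit eigenvector of $\tilde\sigma(x,\hat\pi)-\tilde\sigma(x,\pi)$, uses the decomposition $A-B=(A^{1/2}-B^{1/2})A^{1/2}+B^{1/2}(A^{1/2}-B^{1/2})$ (your symmetrized identity $XY+YX=2(A-B)$ is the same computation), and bounds the Rayleigh quotient below by $\lambda_{\min}(A^{1/2}+B^{1/2})\ge 2\sigma_0$. Your Part 2 also starts the same way as the paper, with $\|\tilde\sigma^2(x,\hat\pi)-\tilde\sigma^2(x,\pi)\|_2\le\bar\sigma_0^2\,\|\hat\pi(\cdot|x)-\pi(\cdot|x)\|_1$.

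The gap is in your final interpolation step. Combining $\|A-B\|_2\le\min\bigl(2\bar\sigma_0^2,\ \bar\sigma_0^2\|\hat\pi-\pi\|_1\bigr)$ with Part 1 gives $\|\tilde\sigma(x,\hat\pi)-\tilde\sigma(x,\pi)\|_2\le\frac{\sqrt{2}\,\bar\sigma_0^2}{2\sigma_0}\|\hat\pi-\pi\|_1^{1/2}$, not the stated $\frac{\bar\sigma_0}{2\sigma_0}\|\hat\pi-\pi\|_1^{1/2}$; the leftover factor $\sqrt{2}\,\bar\sigma_0$ is a model parameter, not a universal constant, so it cannot simply be ``absorbed.'' If you want the exponent $\tfrac12$ with a clean constant, a sharper route is available from your own eigenvector identity: since $Y\pm X=2B^{1/2},\,2A^{1/2}\succeq 0$, one has $v^\top Yv\ge|\lambda|$, hence $\lambda^2\le|v^\top(A-B)v|\le\|A-B\|_2$, i.e.\ $\|A^{1/2}-B^{1/2}\|_2\le\|A-B\|_2^{1/2}\le\bar\sigma_0\|\hat\pi-\pi\|_1^{1/2}$ — still not the constant claimed in the lemma. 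To be fair, the paper's own proof does not close this step either: it stops after the $L^1$ bound on $\|\tilde\sigma^2(x,\hat\pi)-\tilde\sigma^2(x,\pi)\|_2$, which together with the first part only yields the linear estimate $\frac{\bar\sigma_0^2}{2\sigma_0}\|\hat\pi-\pi\|_1$, so the mismatch in constant/exponent is really an imprecision of the lemma as stated; but as a blind proof of the statement as written, your concluding ``absorb the constant'' step is the point that fails. (Minor remark: your parenthetical that the hypothesis ``forces'' $\sigma^2(x,a)\preceq\bar\sigma_0^2 I$ for a.e.\ $a$ is not implied by the bound on the two fixed policies alone; in the paper this pointwise bound comes from Assumption 1(i).)
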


\medskip
\textit{Proof}. Consider a normalized vector $x$ with $\|x\|_2=1$ is an eigenvector of $A^{\frac{1}{2}}-B^{\frac{1}{2}}$ with eigenvalue $\mu$ then
\begin{equation*}
\begin{aligned}
x^T(A-B) x & =x^T(A^{\frac{1}{2}}-B^{\frac{1}{2}}) A^{\frac{1}{2}} x+x^T B^{\frac{1}{2}}(A^{\frac{1}{2}}-B^{\frac{1}{2}}) x \\
& =\mu x^T(A^{\frac{1}{2}}+B^{\frac{1}{2}}) x.
\end{aligned}
\end{equation*}
thus, if $A,B\geq \sigma_0^2 I$, this implies
\begin{equation*}
\mu\leq \frac{|x^T(A-B) x|}{x^T(A^{\frac{1}{2}}+B^{\frac{1}{2}}) x} \leq\|A-B\|_2 \cdot \lambda_{\min }(A^{\frac{1}{2}}+B^{\frac{1}{2}})^{-1} \leq\|A-B\|_{2} /(2 \sigma_0).
\end{equation*}
\noindent Furthermore, note that
\begin{equation*}
\tilde{\sigma}^2(x,\hat{\pi})-\tilde{\sigma}^2(x,\pi)=\int_{\mathcal{A}} \sigma^2(x, a) (\tilde{\pi}(a|x)-\pi(a|x)) \mathrm{d} a.
\end{equation*}
so
\begin{equation*}
\|\tilde{\sigma}^2(x,\hat{\pi})-\tilde{\sigma}^2(x,\pi)\|_2\leq \bar{\sigma}_0^2\int_{\mathcal{A}} |\tilde{\pi}(a|x)-\pi(a|x)| \mathrm{d} a = \bar{\sigma}_0^2\cdot \|\tilde{\pi}(a|x)-\pi(a|x)\|_1.
\end{equation*}
\hfill $\square$

\medskip

\textit{Proof} (of Lemma \ref{Lemma:Wasserstein distance bound}). 
Consider the Wasserstein-2 distance $W_2(\mu,v)$ between distribution $\mu$ and $v$ as
\begin{equation*}
\label{W2-dist}
W_2(\mu, \nu)=\left(\inf _{\gamma \in \Gamma(\mu, \nu)} \mathbf{E}_{(x, y) \sim \gamma} \|x-y\|_2^2\right)^{1 /2},
\end{equation*}
where $\Gamma(\mu, \nu)$ is the set all probability measures on the product space $\mathbb{R}^n\times\mathbb{R}^n$ with the marginal distributions being $\mu$ and $v$, and $\|\cdot\|_2$ is the standard Euclidean distance. Denote
\begin{equation*}
\bar{d}^{\pi}_{\mu}:=\beta d^{\pi}_{\mu}.
\end{equation*}
We want to get an upper bound on $W_2(\bar{d}^{\pi}_{\mu},\bar{d}^{\hat{\pi}}_{\mu})$ in terms of the distance between two policies $\pi$ and $\hat{\pi}$. Consider a specific coupling $(X_t,Y_t)$ below:
\begin{equation}
\label{coupling}
\left\{\begin{aligned}
\mathrm{d} X_s&=\tilde{b}\left(X_s, \pi\left(\cdot \mid  X_s\right)\right) \mathrm{d} s+\tilde{\sigma}\left(X_s, \pi\left(\cdot \mid X_s\right)\right) \mathrm{d} B_s, \\
\mathrm{d} Y_s&=\tilde{b}\left(Y_s, \hat{\pi}\left(\cdot \mid  Y_s\right)\right) \mathrm{d} s+\tilde{\sigma}\left(Y_s, \hat{\pi}\left(\cdot \mid Y_s\right)\right) \mathrm{d} B_s.
\end{aligned}
\right.
\end{equation}
with $X_0=Y_0$, which leads to a joint distribution over $\mathbb{R}^n\times\mathbb{R}^n$:
\begin{equation*}
\tilde{\gamma}:=\left\{\tilde{p}(x,y)=\int_{0}^{\infty}\frac{1}{\beta}e^{-\beta t}f_{(X_t,Y_t)}(x,y)\mathrm{d}t\right\}.
\end{equation*} 
Hence,
\begin{equation}
\label{W2-inequality}
W_2^2(\bar{d}^{\pi}_{\mu},\bar{d}^{\hat{\pi}}_{\mu}) \leq \mathbb{E}_{(x, y) \sim \tilde{\gamma}} \|x-y\|_2^2
= \int_0^{\infty}\frac{1}{\beta}e^{-\beta s}\mathbb{E}\|X_s-Y_s\|_2^2\mathrm{d}s.
\end{equation}
It then boils down to estimating $\mathbb{E}\|X_s-Y_s\|_2^2$. By It\^o's formula,
\begin{equation*}
\begin{aligned}
\mathrm{d} \|X_s-Y_s\|_2^2=&2(X_s-Y_s)^{\top}\left[(\tilde{b}\left(X_s, \pi\right)-\tilde{b}\left(Y_s, \hat{\pi}\right))\mathrm{d}s+(\tilde{\sigma}\left(X_s, \pi\right)-\tilde{\sigma}\left(Y_s, \hat{\pi}\right))\mathrm{d}B_s\right]\\
&+\operatorname{Tr}\left[(\tilde{\sigma}\left(X_s, \pi\right)-\tilde{\sigma}\left(Y_s, \hat{\pi}\right))^2\right]\mathrm{d}s.
\end{aligned}
\end{equation*}
Taking expectation on both sides yields
\begin{equation}
\label{A+B}
\frac{\mathrm{d}}{\mathrm{d}s}\mathbb{E}\|X_s-Y_s\|_2^2=2\underset{(A)}{\underbrace{\mathbb{E}\left[(X_s-Y_s)^{\top}(\tilde{b}\left(X_s, \pi\right)-\tilde{b}\left(Y_s, \hat{\pi}\right))\mathrm{d}s\right]}}+\underset{(B)}{\underbrace{\operatorname{Tr}\left[\mathbb{E}(\tilde{\sigma}\left(X_s, \pi\right)-\tilde{\sigma}\left(Y_s, \hat{\pi}\right))^2\right]}},
\end{equation}
with
\begin{equation*}
\begin{aligned}
\text{(A)}&=\mathbb{E}\left[(X_s-Y_s)^{\top}(\tilde{b}\left(X_s, \pi\right)-\tilde{b}\left(Y_s, \pi\right))\mathrm{d}s\right]+\mathbb{E}\left[(X_s-Y_s)^{\top}(\tilde{b}\left(Y_s, \pi\right)-\tilde{b}\left(Y_s, \hat{\pi}\right))\mathrm{d}s\right]\\
&\leq C_{\tilde{b}}\cdot\mathbb{E}\|X_s-Y_s\|_2^2+\frac{1}{2}\mathbb{E}\|X_s-Y_s\|_2^2+\frac{1}{2}\mathbb{E}\|\tilde{b}\left(Y_s, \pi\right)-\tilde{b}\left(Y_s, \hat{\pi}\right)\|_2^2\\
&\leq (C_{\tilde{b}}+\frac{1}{2})\cdot\mathbb{E}\|X_s-Y_s\|_2^2+\frac{1}{2}\|\tilde{b}(\cdot,\pi)-\tilde{b}(\cdot,\hat{\pi})\|_{2,\infty}^2 ;
\end{aligned}
\end{equation*}
 and \begin{equation*}
\begin{aligned}
\text{(B)}&=\mathbb{E}\|\tilde{\sigma}\left(X_s, \pi\right)-\tilde{\sigma}\left(Y_s, \hat{\pi}\right)\|^2_{F}\\
&\leq 2 \mathbb{E}\|\tilde{\sigma}\left(X_s, \pi\right)-\tilde{\sigma}\left(Y_s, \pi\right)\|^2_{F}+2 \mathbb{E}\|\tilde{\sigma}\left(Y_s, \pi\right)-\tilde{\sigma}\left(Y_s, \hat{\pi}\right)\|^2_{F}\\
&\leq 2 C_{\tilde{\sigma}}^2\cdot \mathbb{E}\left\|X_s-Y_s\right\|^2_2+2\sup_{x} \|\tilde{\sigma}\left(x, \pi\right)-\tilde{\sigma}\left(x, \hat{\pi}\right)\|^2_{F}\\
&:= 2 C_{\tilde{\sigma}}^2\cdot \mathbb{E}\left\|X_s-Y_s\right\|^2_2+2 \|\tilde{\sigma}\left(\cdot, \pi\right)-\tilde{\sigma}\left(\cdot, \hat{\pi}\right)\|^2_{F,\infty}.
\end{aligned}
\end{equation*}
Combining the above, we get
\begin{equation*}
\frac{\mathrm{d}}{\mathrm{d}s}\mathbb{E}\|X_s-Y_s\|_2^2\leq \underset{C_{\tilde{b},\tilde{\sigma}}}{\underbrace{(2C_{\tilde{b}}+1+2 C_{\tilde{\sigma}}^2)}}\mathbb{E}\|X_s-Y_s\|_2^2+\underset{C(\pi,\hat{\pi})}{\underbrace{\|\tilde{b}(\cdot,\pi)-\tilde{b}(\cdot,\hat{\pi})\|_{2,\infty}^2+2 \|\tilde{\sigma}\left(\cdot, \pi\right)-\tilde{\sigma}\left(\cdot, \hat{\pi}\right)\|^2_{F,\infty}}}.
\end{equation*}
By Gr\"{o}nwall's inequality, we have
\begin{equation}
\label{eq:Gron}
\mathbb{E}\|X_t-Y_t\|_2^2\leq \frac{C(\pi,\hat{\pi})}{C_{\tilde{b},\tilde{\sigma}}}\left(e^{C_{\tilde{b},\tilde{\sigma}}t}-1\right).
\end{equation}
Substituting back into \eqref{W2-inequality}, we obtain
\begin{equation*}
W_2^2(\bar{d}^{\pi}_{\mu},\bar{d}^{\hat{\pi}}_{\mu})\leq \frac{C(\pi,\hat{\pi})}{C_{\tilde{b},\tilde{\sigma}}}\int_0^{\infty}\frac{1}{\beta}e^{-\beta s}\left(e^{C_{\tilde{b},\tilde{\sigma}}s}-1\right)\mathrm{d}s.
\end{equation*}
Thus, if $\beta>C_{\tilde{b},\tilde{\sigma}}$, we have
\begin{equation*}
W_2(\bar{d}^{\pi}_{\mu},\bar{d}^{\hat{\pi}}_{\mu})\leq \frac{C(\pi,\hat{\pi})}{C_{\tilde{b},\tilde{\sigma}}(\beta-C_{\tilde{b},\tilde{\sigma}})\beta}.
\end{equation*}
Concerning the term $C(\pi,\hat{\pi})$, we have
\begin{equation*}
\|\tilde{b}(\cdot,\pi)-\tilde{b}(\cdot,\hat{\pi})\|_{2,\infty}=\sup_{x}\|\tilde{b}\left(x, \pi\right)-\tilde{b}\left(x, \hat{\pi}\right)\|_2\leq \sup_{x}\|\hat{\pi}(\cdot|x)-\pi(\cdot|x)\|_1 \cdot \sup_{x,a}|b(x,a)|,
\end{equation*}
and
\begin{equation*}
\|\tilde{\sigma}(\cdot,\pi)-\tilde{\sigma}(\cdot,\hat{\pi})\|_{F,\infty} =\sup_{x} \|\tilde{\sigma}\left(x, \pi\right)-\tilde{\sigma}\left(x, \hat{\pi}\right)\|_F \leq \sqrt{n} \frac{\bar{\sigma}_0}{ 2\sigma_0} \sup_{x}\|\hat{\pi}(\cdot|x)-\pi(\cdot|x)\|_1^{\frac{1}{2}}.
\end{equation*}
Thus we have:
\begin{equation*}
\begin{aligned}
C(\pi,\hat{\pi})&=\|\tilde{b}(\cdot,\pi)-\tilde{b}(\cdot,\hat{\pi})\|_{2,\infty}^2+2 \|\tilde{\sigma}\left(\cdot, \pi\right)-\tilde{\sigma}\left(\cdot, \hat{\pi}\right)\|^2_{F,\infty}\\
&\leq \left(\sup_{x,a}|b(x,a)|^2+\frac{d\cdot\bar{\sigma}_0^2}{ 2\sigma_0^2} \right)\max\left(\sup_{x}\|\hat{\pi}(\cdot|x)-\pi(\cdot|x)\|_1,\sup_{x}\|\hat{\pi}(\cdot|x)-\pi(\cdot|x)\|^{\frac{1}{2}}_1\right)
\end{aligned}
\end{equation*}
which proves our upper bound.
\hfill $\square$

\medskip
\textit{Proof} (of Theorem \ref{Theorem:Performance difference bound}).
We have that
\begin{equation}
\label{K}
\begin{aligned}
|\eta^{\hat{\pi}}-L^{\pi}(\hat{\pi})|&=|\langle d_{\mu}^{\hat{\pi}}-d_{\mu}^\pi, f\rangle|  =\frac{\|f\|_{\dot{H}^1}}{\beta} \left| \left\langle \bar{d}_{\mu}^{\hat{\pi}}-\bar{d}_{\mu}^\pi, \frac{f}{\|f\|_{\dot{H}^1}}\right\rangle \right|\\
&\leq \frac{K}{\beta}\|\bar{d}_{\mu}^{\hat{\pi}}-\bar{d}_{\mu}^\pi\|_{\dot{H}^{-1}}
\leq \frac{K\sqrt{M}}{\beta}W_{2}\left(\bar{d}_{\mu}^{\hat{\pi}}, \bar{d}_{\mu}^\pi\right).
\end{aligned}
\end{equation}
where $K:=\sup_{\hat{\pi}}\|f\|_{\dot{H}^1}<\infty$ (more about $K$ in the remarks below). 
Combining \eqref{K} with the estimate in \eqref{eq:W2bound} (of Lemma \ref{Lemma:Wasserstein distance bound})
yields the desired result in \eqref{eqn:Performance difference bound}.
\hfill $\square$

\medskip
\textit{Remarks} (on $K$).  In the performance-difference bound developed above, we assume $K$ is finite:
\begin{equation*}
K:=\|f\|_{\dot{H}^1}:=\left(\int_{\mathbb{R}^n} |\nabla f(x)|^2 \mathrm{d} x\right)^{\frac{1}{2}}<\infty,
\end{equation*}
where $f(x;\pi,\hat{\pi}):=\int_{\mathcal{A}}\hat{\pi}(a\mid x)\left(q(x,a;\pi)+p(x,a,\hat{\pi})\right)\mathrm{d}a$. The famous Poincar\'e inequality can provide a lower bound on this quantity; but we need an upper bound as well, i.e.,
\begin{equation*}
K=\left(\int_{\mathbb{R}^n} |\nabla f(x)|^2 \mathrm{d} x\right)^{\frac{1}{2}}\leq C \left(\int_{\mathbb{R}^n} |f(x)|^2 \mathrm{d} x\right)^{\frac{1}{2}}.
\end{equation*}
 This above is essentially a {\it reverse} Poincar\'e Inequality, which is not likely to hold (in particular, the existence of the constant $C$). 
 
Should we indeed have a reverse Poincar\'e Inequality, then we can further bound $f$ by
\begin{equation*}
\begin{aligned}
|f(x)|&=|\int_{\mathcal{A}}\left(\hat{\pi}(a\mid x)-\pi(a\mid x)\right)\left(q(x,a;\pi)+p(x,a,\hat{\pi})\right)\mathrm{d}a|\\
&\leq \int_{\mathcal{A}}\left|\hat{\pi}(a\mid x)-\pi(a\mid x)\right|\cdot\left|q(x,a;\pi)+p(x,a,\hat{\pi})\right|\mathrm{d}a\\
&\leq 2\sup_{a}\left|q(x,a;\pi)+p(x,a,\hat{\pi})\right|D_{\operatorname{TV}}(\pi(\cdot\mid x),\hat{\pi}(\cdot\mid x)),
\end{aligned}
\end{equation*}
and
\begin{equation*}
\begin{aligned}
\left(\int_{\mathbb{R}^n} |f(x)|^2 \mathrm{d} x\right)^{\frac{1}{2}}&\leq \left(\int_{\mathbb{R}^n} 4\sup_{a}\left|q(x,a;\pi)+p(x,a,\hat{\pi})\right|^2D^2_{\operatorname{TV}}(\pi(\cdot\mid x),\hat{\pi}(\cdot\mid x)) \mathrm{d} x\right)^{\frac{1}{2}}\\
&\leq \left(\int_{\mathbb{R}^n} 2\sup_{a}\left|q(x,a;\pi)+p(x,a,\hat{\pi})\right|^2\mathrm{d} x\right)^{\frac{1}{2}}\sqrt{\sup_{x}D_{\operatorname{KL}}(\pi(\cdot\mid x),\hat{\pi}(\cdot\mid x)) },
\end{aligned}
\end{equation*}
where the second inequality is from Pinsker's inequality. This way, we would have recovered a similar bound as in the discrete RL. 
Since we do not have the reverse Poincar\'e inequality, however, we have to assume that $K$ is finite.

\newpage
\section{Algorithms}
\label{Appendix: Algorithm Details}

\subsection{Performance of CPPO with Square-root KL and Linear KL}

Here we present a detailed version of the CPPO algorithm. 
For  two probability distributions $P$ and $Q$ over the action space with density functions $p$ and $q$ correspondingly, the KL-divergence between these two is defined as:
\begin{equation*}
D_{\mathrm{KL}}(P\|Q) = \int_{\mathcal{A}}\log(\frac{q(a)}{p(a)})q(a)\mathrm{d} a,
\end{equation*}
Denote $D_{\mathrm{KL}}(\theta,\theta_k):=\mathbb{E}_{x\sim d^{\theta_k}_\mu}D_{\mathrm{KL}}(\pi_{\theta}(\cdot|x) \|\pi_{\theta_k}(\cdot|x))$, to distinguish it from  $$\bar{D}_{\mathrm{KL}}(\theta\|\theta_k):=\mathbb{E}_{x\sim d^{\theta_k}_\mu}\sqrt{D_{\mathrm{KL}}(\pi_{\theta}(\cdot|x) \|\pi_{\theta_k}(\cdot|x))}$$ which was used in CPPO Algorithm  in \ref{Alg:PPO adaptive constant}. Note that bounding the performance difference by the linear KL-divergence $D_{\mathrm{KL}}(\theta,\theta_k)$,
 instead of its square-root counterpart $\bar{D}_{\mathrm{KL}}(\theta\|\theta_k)$,
 will generally require stronger conditions (which may be difficult to satisfy). 
 For completeness, we present the following algorithm, the CPPO with linear KL-divergence:
\begin{algorithm}[H]
   \caption{CPPO: PPO with adaptive penalty constant (linear KL-divergence)}
   \hspace*{\algorithmicindent} \textbf{Input}: Policy parameters $\theta_0$, critic net parameters $\phi_0$
\begin{algorithmic}[1]
   \For{$k=0,1,2,\cdots$ until $\theta_k$ converge }
   \State Collect a truncated trajectory $\left\{X_{t_i}, a_{t_i}, r_{t_i}, p_{t_i}\right\}, i=1, \ldots, N$ from the environment using $\pi_{\theta_k}$.
   \For{$i=0, \ldots, N-1$}:  Update the critic parameters as in (\ref{critic update})
   \EndFor
   \For{$j=1,, \ldots, J$}: Draw i.i.d. $\tau_j$ from $\exp(\beta)$, round $\tau_j$ to the largest multiple of $\delta_t$ no larger than it, and compute the GAE estimator of $q(X_{\tau_j},a_{\tau_j})$ 
   \begin{equation*}
   \tilde{q}(X_{\tau_j},a_{\tau_j}):=\left(r_{\tau_j}\delta_t+e^{-\beta\delta_t}V(X_{\tau_j+\delta_t})-V(X_{\tau_j})\right)/\delta_t.
   \end{equation*}
   \EndFor
   \State Compute policy update (by taking a fixed $s$ steps of gradient descent)
    \begin{equation*}
    \theta_{k+1}=\arg \max _\theta L^{\theta_k}(\theta)-C^k_{\text{penalty}} D_{\mathrm{KL}}\left(\theta, \theta_k\right).
    \end{equation*}

\If{$D_{\mathrm{KL}}\left(\theta_{k+1},\theta_k\right) \geq (1+\epsilon)\delta$, } $\quad C^{k+1}_{\text{penalty}}=2 C^k_{\text{penalty}}.$
\ElsIf {$D_{\mathrm{KL}}\left(\theta_{k+1}, \theta_k\right) \leq \delta / (1+\epsilon)$, } $\quad C^{k+1}_{\text{penalty}}=C^k_{\text{penalty}} / 2.$
\EndIf
\EndFor
\end{algorithmic}
\end{algorithm}
A comparison between the above and Algorithm \ref{Alg:PPO adaptive constant} (using
 square-root KL divergence) is presented in \S\ref{compareKL} below, 
which clearly illustrates the advantage of square-root KL divergence.

\subsection{KL-divergence}

We elaborate here on the KL-divergence between the current policy and the optimal policy, 
along with the entropy regularizer. By the performance difference formula, we have
\begin{equation*}
\eta(\pi)-\eta(\pi^*)=\int_{\mathbb{R}^n} d^{\pi}_{\mu}(x) \left[\int_{\mathcal{A}}\pi(a\mid x)\left(q(x,a;\pi^*)-\gamma \log(\pi(a))\right)\mathrm{d}a\right]\mathrm{d}x.
\end{equation*}
Notice that by the definition of KL-divergence we defined before, we have
\begin{equation*}
D_{\mathrm{KL}}(\pi^*(\cdot|x) \|\pi(\cdot|x)) = \int_{\mathcal{A}}\log(\frac{\pi(a|x)}{\pi^*(a|x)})\pi(a|x)\mathrm{d} a.
\end{equation*}
Similar as the previous discussion of soft $q$-learning, $\pi^*$ is optimal implies that
\begin{equation*}
\pi^*(a\mid x)\propto \exp(\frac{q(x,a,\pi^*)}{\gamma}),
\end{equation*}
and the normalization constant is 1 can be proved through considering the exploratory HJB equation, see \cite{jia2022q_learning,tang2022exploratory}. Thus
\begin{equation*}
D_{\mathrm{KL}}(\pi^*(\cdot|x) \|\pi(\cdot|x)) = \int_{\mathcal{A}}\log(\pi(a|x))\pi(a|x)\mathrm{d} a - \int_{\mathcal{A}}\frac{q(x,a,\pi^*)}{\gamma}\pi(a|x)\mathrm{d} a,
\end{equation*}
which leads to
\begin{equation*}
\eta(\pi)-\eta(\pi^*)=-\gamma\cdot\mathbb{E}_{x\sim d^{\pi}_\mu}D_{\mathrm{KL}}(\pi^*(\cdot|x) \|\pi(\cdot|x)).
\end{equation*}
This justifies our claim that the KL-divergence is essentially equivalent to the distance to the optimal performance.

\newpage

\section{Experiments}
\label{Appendix: Experiments section}
\subsection{Example 1}
\label{Appendix: Experiments Details_Exp 1}
Recall, in the LQ control problem, the reward function is 
\begin{equation*}
r(x, a)=-\left(\frac{M}{2} x^2+R x a+\frac{N}{2} a^2+P x+Q a\right),
\end{equation*}
with $M \geq 0, N>0, R, Q, P \in \mathbb{R}$ and $R^2<M N$, and we adopt the entropy regularizor as
\begin{equation*}
p(x,a,\pi)=-\log(\pi(a)).
\end{equation*}
Furthermore, suppose that the discount rate satisfies $\beta>2 A+C^2+\max \left(\frac{D^2 R^2-2 N R(B+C D)}{N}, 0\right)$. 

The following results are readily derived from Theorem 4 of \cite{wang2020reinforcement}.
The value function of the optimal policy $\pi^*$ is 
\begin{equation*}
V(x)=\frac{1}{2} k_2 x^2+k_1 x+k_0, \quad x \in \mathbb{R},
\end{equation*}
where \begin{equation*}
\begin{gathered}
k_2:=\frac{1}{2} \frac{\left(\rho-\left(2 A+C^2\right)\right) N+2(B+C D) R-D^2 M}{(B+C D)^2+\left(\rho-\left(2 A+C^2\right)\right) D^2} \\
-\frac{1}{2} \frac{\sqrt{\left(\left(\rho-\left(2 A+C^2\right)\right) N+2(B+C D) R-D^2 M\right)^2-4\left((B+C D)^2+\left(\rho-\left(2 A+C^2\right)\right) D^2\right)\left(R^2-M N\right)}}{(B+C D)^2+\left(\rho-\left(2 A+C^2\right)\right) D^2}, \\
k_1:=\frac{P\left(N-k_2 D^2\right)-Q R}{k_2 B(B+C D)+(A-\rho)\left(N-k_2 D^2\right)-B R},
\end{gathered}
\end{equation*}
and
\begin{equation*}
k_0:=\frac{\left(k_1 B-Q\right)^2}{2 \rho\left(N-k_2 D^2\right)}+\frac{\gamma}{2 \rho}\left(\ln \left(\frac{2 \pi e \gamma}{N-k_2 D^2}\right)-1\right)
\end{equation*}
respectively. Moreover, the optimal feedback control is Gaussian, with density function 
\begin{equation*}
\pi^*(a ; x)=\mathcal{N}\left(a \mid \frac{\left(k_2(B+C D)-R\right) x+k_1 B-Q}{N-k_2 D^2}, \frac{\gamma}{N-k_2 D^2}\right).
\end{equation*}

For a set of model parameters: $A = -1, B = C = 0, D = 1, M = N = Q = 2, R = P = 1, \beta = 1, \gamma = 0.1$, following the formulas 
and the parameterized policy $\pi_{\theta}(\cdot\mid x)=\mathcal{N}(\theta_1 x+\theta_2,\exp(\theta_3))$, and the corresponding  
value function $V_{\phi}(x)=\frac{1}{2} \phi_2 x^2+\phi_1 x+\phi_0$, we can derive the optimal parameters: 
\begin{equation*}
\phi^* = [0.71914874, -0.10555128, -0.53518376],
\end{equation*}
and
\begin{equation*}
\theta^* = [-0.39444872, -0.78889745, -1.40400944].
\end{equation*}

\begin{table}[H]
  \caption{Hyper-parameter values for Example 1}
  \label{Example 1}
  \centering
  \begin{tabular}{lll}
    \toprule
    Alphabet     & Description     & Value \\
    \midrule
    $T$ & Trajectory Truncation Length  & 25     \\
    $\beta$     & discount factor & 1      \\
    $\delta_t$     & time interval       & 0.005  \\
    $J$     & batch size for sampling $\exp(\beta)$       & 100  \\
    $\alpha_1$ & learning rate for policy iteration $k$ & 0.02 when $k\leq 50$ and $0.02\times\log(\frac{50}{k})$ when $k>50$\\
    $\alpha_2$ & learning rate for value iteration $k$ & 0.01 when $k\leq 50$ and $0.01\times\log(\frac{50}{k})$ when $k>50$\\
    $K$ & iteration threshold & 2000\\
    $s$ & steps of gradient descent & 10 \\
    $\delta$ & radius & 0.0002 \\
    $\epsilon$ & tolerance level & 0.5 \\
    \bottomrule
  \end{tabular}
\end{table}

\subsection{Example 2}
\label{Appendix: Experiments Details_Exp 2}
The model parameters are $k=0.01, \theta=7, \eta=0.1, \rho=0.3, \sigma=1, r_f=0.01, \ell=5$.
For both the value function and the policy parameterization, we use a 3-layer neural network, and with the initial parameters sampled form the uniform distribution over [-0.5,0.5]. We use the tanh activation function for the hidden layer.

\begin{table}[H]
  \caption{Hyperparameter values for Example 2}
  \label{Example 2}
  \centering
  \begin{tabular}{lll}
    \toprule
    Alphabet     & Description     & Value \\
    \midrule
    $T$ & Trajectory Truncation Length  & 25     \\
    $\beta$     & discount factor & 1      \\
    $\delta_t$     & time interval       & 0.005  \\
    $J$     & batch size for sampling $\exp(\beta)$       & 100  \\
    $\alpha_1$ & learning rate for policy iteration $k$ & 0.005 when $k\leq 50$ and $0.005\times\log(\frac{50}{k})$ when $k>50$\\
    $\alpha_2$ & learning rate for value iteration $k$ & 0.01 when $k\leq 50$ and $0.01\times\log(\frac{50}{k})$ when $k>50$\\
    $K$ & iteration threshold & 200\\
    $s$ & steps of gradient descent & 10 \\
    $\delta$ & radius & 0.025 \\
    $\epsilon$ & tolerance level & 0.5 \\
    \bottomrule
  \end{tabular}
\end{table}

\subsection{Performance of CPPO with Square-root KL and Linear KL}
\label{compareKL}

We compare the performance of CPPO with square-root KL-divergence (denote as CPPO), and linear KL-divergence 
(denoted as CPPO (nst) --- non square-root)
applied to the experiments in Example 1 and Example 2.
\begin{figure}[!htb]
     \centering
     \includegraphics[width=0.7\linewidth]{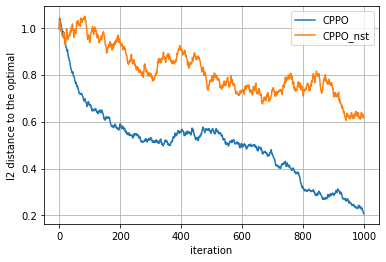}
     \caption{Performance of CPPO and CPPO (nst) to the Example 1}\label{Fig:Comparison of CPPO and CPPO (nst) in Example 1}
\end{figure}
\begin{figure}[!htb]
     \centering
     \includegraphics[width=1\linewidth]{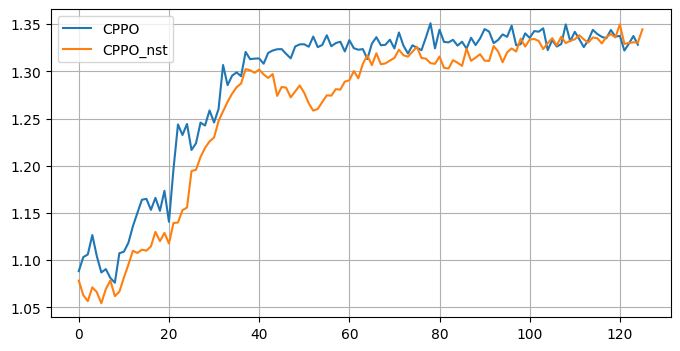}
     \caption{Performance of CPPO and CPPO (nst) to the Example 2}\label{Fig:Comparison of CPPO and CPPO (nst) in Example 2}
\end{figure}
Figure \ref{Fig:Comparison of CPPO and CPPO (nst) in Example 1} compares the distance between the current policy parameters and the optimal parameters, with $x$-axis denoting the iteration times and $y$-axis denoting the $L_2$ distance. 
{Figure \ref{Fig:Comparison of CPPO and CPPO (nst) in Example 2} compares the current expected return, with $x$-axis denoting the iteration times and $y$-axis denoting the current performance by taking the average of 100 times of Monte Carlo evaluation.} In both figures, the blue curve represents the algorithm with square-root KL-divergence as opposed to the orange one corresponding to the linear version. Both figures clearly demonstrate the advantage of the former. 
{In particular, the linear version can suffer from getting stuck at the local optimum as demonstrated in Example \ref{Example 1}.}

\subsection{Performance of CPG and CPPO compared to the classical discrete-time algorithms}
\label{discrete vs continuous}
We conduct experiments to compare the CPG and CPPO to their discrete counterparts. Specifically, we discretize the MDP in Example 1, and implement the classical PG and PPO algorithms. Our results show that in time discretization with step size $\delta t=0.1$ and $\delta t=0.05$, the performance of CPG and CPPO is (at least) comparable to their discrete counterparts; in particular, for $\delta t=0.1$, CPG outperforms PG. We have repeated the experiments for 25 random seeds, and plotted both the average performance line and the error bar. These experimental results indicate that the continuous approach has the potential to outperform their discrete counterparts, which is worth further exploring in the future.

\begin{figure}[!htb]
   \begin{minipage}{0.48\textwidth}
     \centering
     \includegraphics[width=1\linewidth]{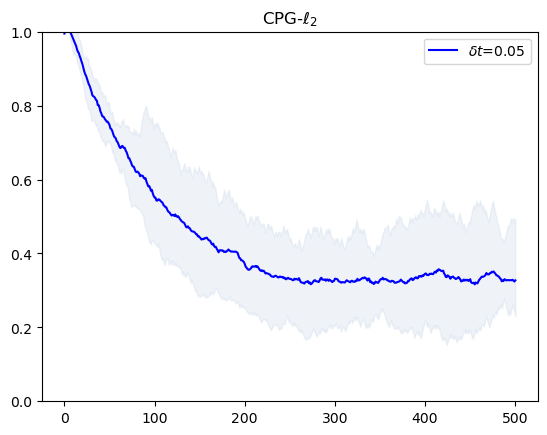}
     \caption{CPG in $l_2$ distance ($\delta_t=0.05$)}
   \end{minipage}\hfill
   \begin{minipage}{0.48\textwidth}
     \centering
     \includegraphics[width=1\linewidth]{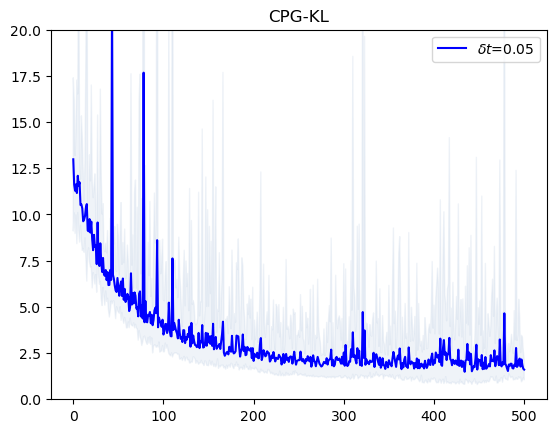}
     \caption{CPG in KL distance ($\delta_t=0.05$)}
   \end{minipage}
\end{figure}

\begin{figure}[!htb]
   \begin{minipage}{0.48\textwidth}
     \centering
     \includegraphics[width=1\linewidth]{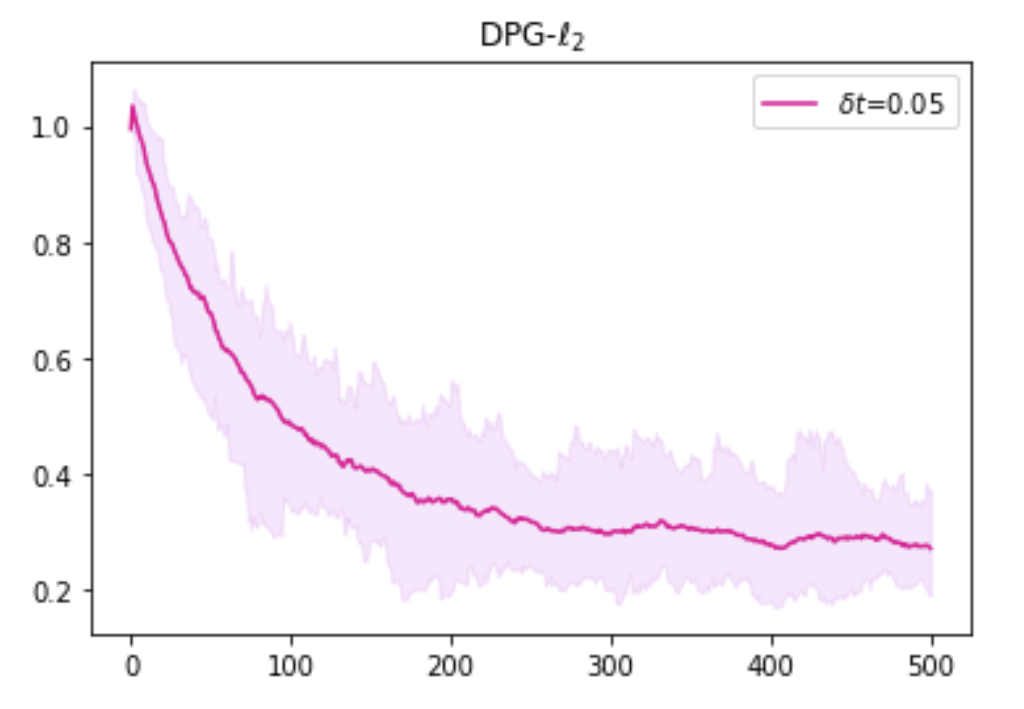}
     \caption{DPG in $l_2$ distance ($\delta_t=0.05$)}
   \end{minipage}\hfill
   \begin{minipage}{0.48\textwidth}
     \centering
     \includegraphics[width=1\linewidth]{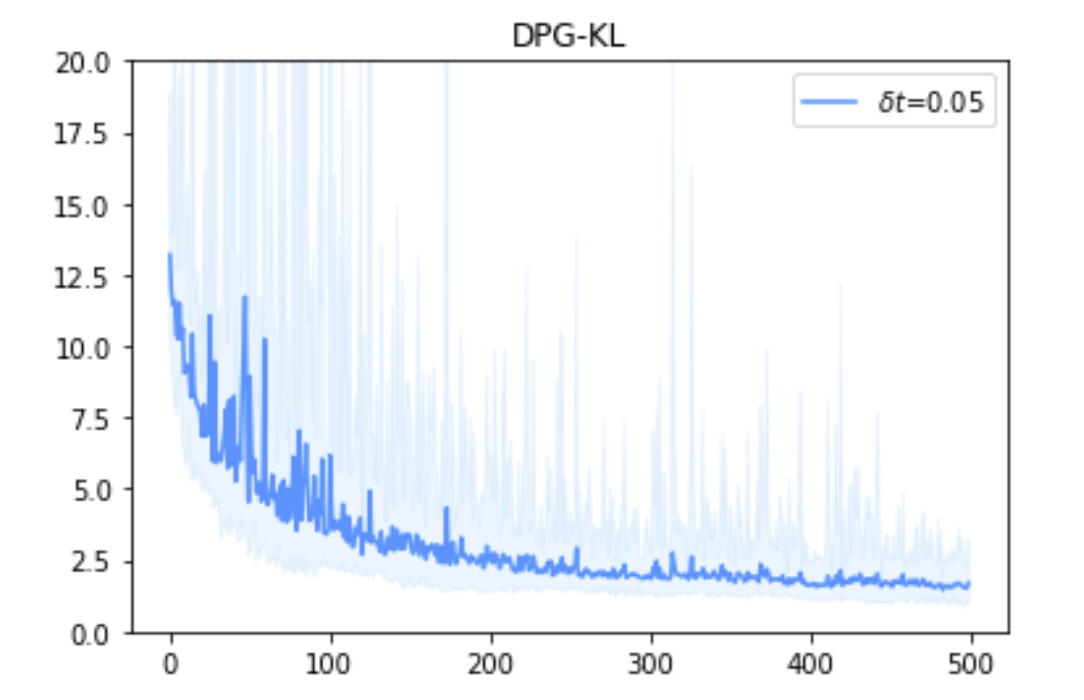}
     \caption{DPG in KL distance ($\delta_t=0.05$)}
   \end{minipage}
\end{figure}

\begin{figure}[!htb]
   \begin{minipage}{0.48\textwidth}
     \centering
     \includegraphics[width=1\linewidth]{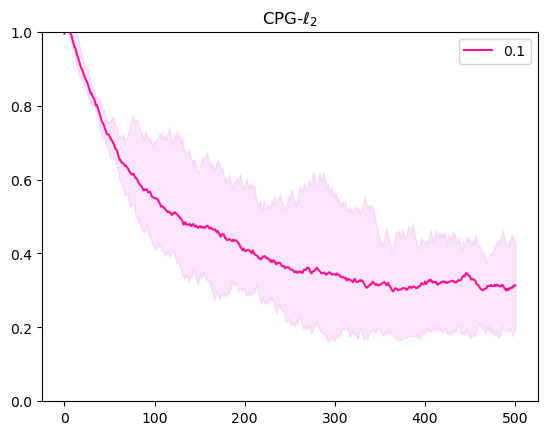}
     \caption{CPG in $l_2$ distance ($\delta_t=0.1$)}
   \end{minipage}\hfill
   \begin{minipage}{0.48\textwidth}
     \centering
     \includegraphics[width=1\linewidth]{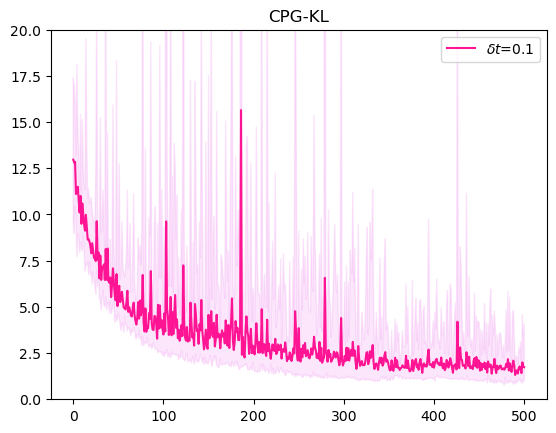}
     \caption{CPG in KL distance ($\delta_t=0.1$)}
   \end{minipage}
\end{figure}

\begin{figure}[!htb]
   \begin{minipage}{0.48\textwidth}
     \centering
     \includegraphics[width=1\linewidth]{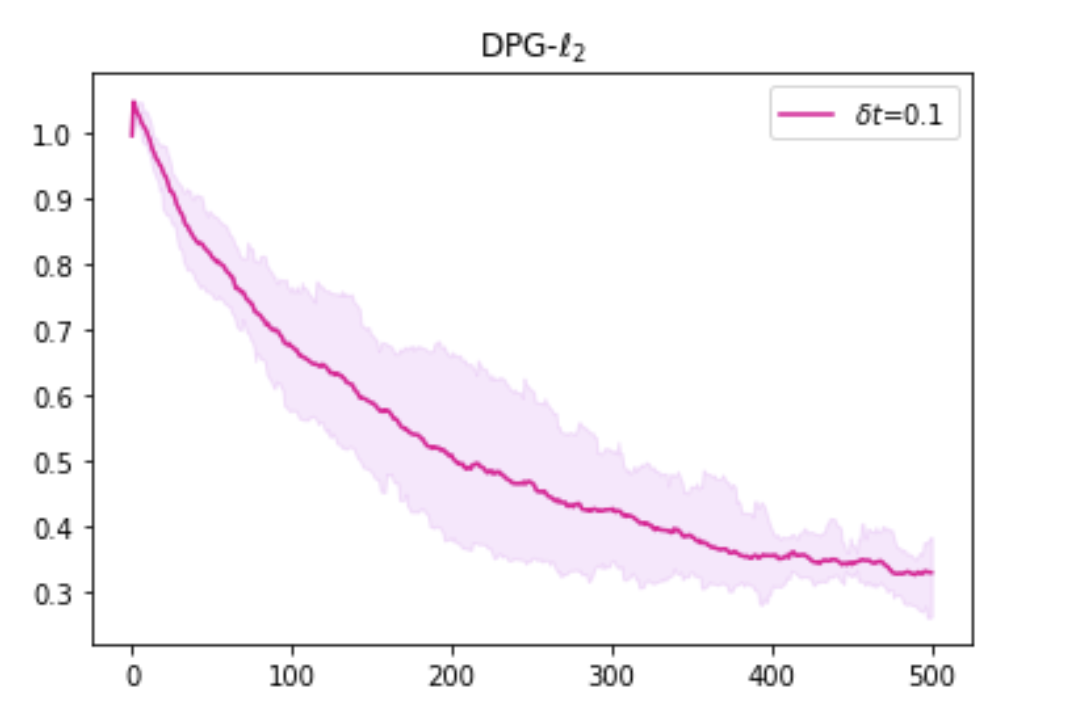}
     \caption{DPG in $l_2$ distance ($\delta_t=0.1$)}
   \end{minipage}\hfill
   \begin{minipage}{0.48\textwidth}
     \centering
     \includegraphics[width=1\linewidth]{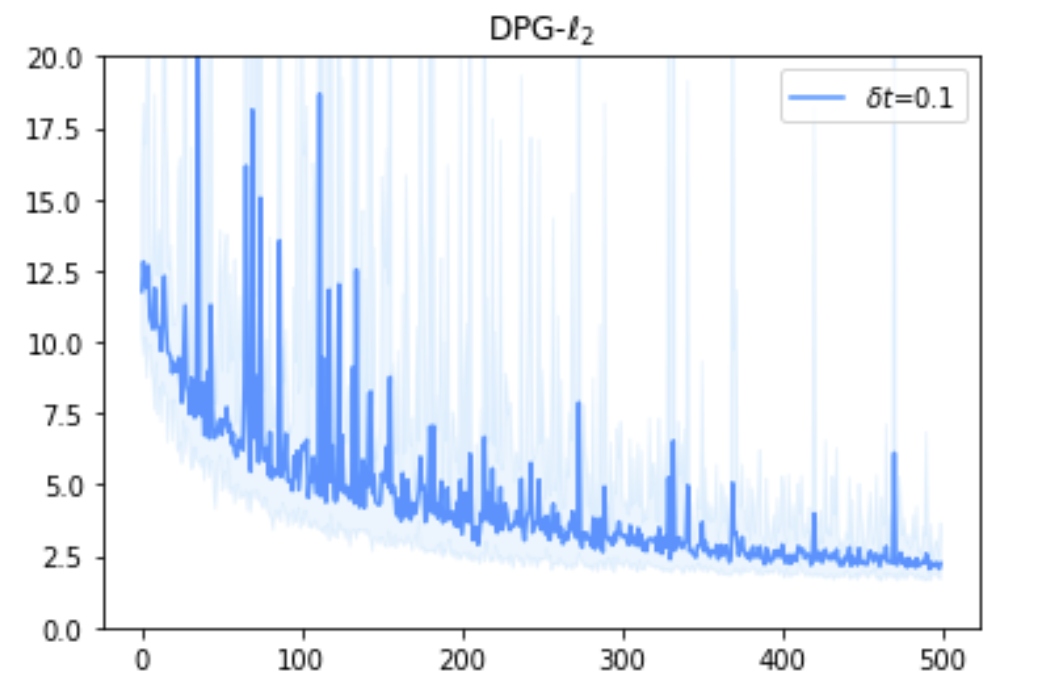}
     \caption{DPG in KL distance ($\delta_t=0.1$)}
   \end{minipage}
\end{figure}

\begin{figure}[!htb]
   \begin{minipage}{0.48\textwidth}
     \centering
     \includegraphics[width=1\linewidth]{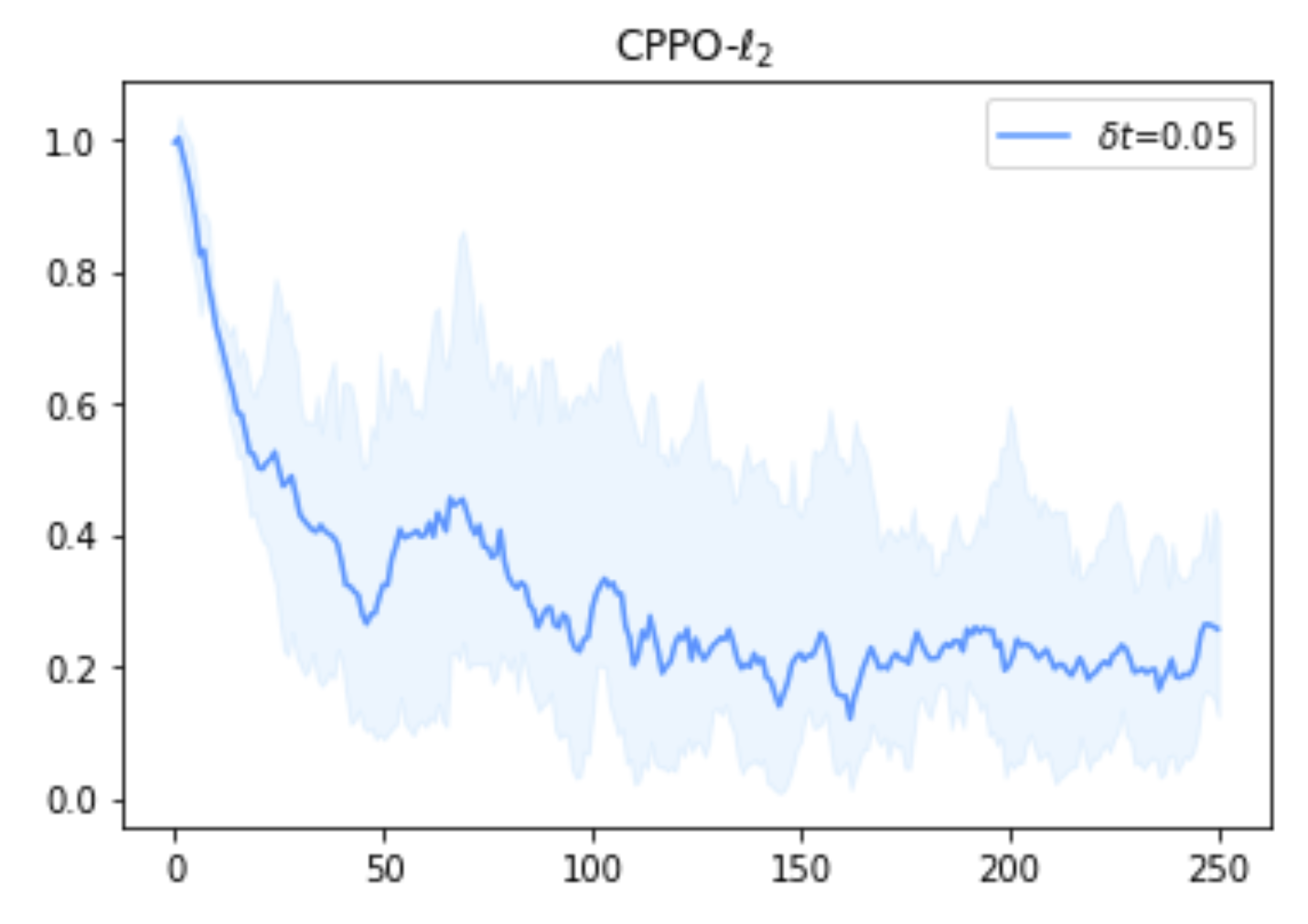}
     \caption{CPPO in $l_2$ distance ($\delta_t=0.05$)}
   \end{minipage}\hfill
   \begin{minipage}{0.48\textwidth}
     \centering
     \includegraphics[width=1\linewidth]{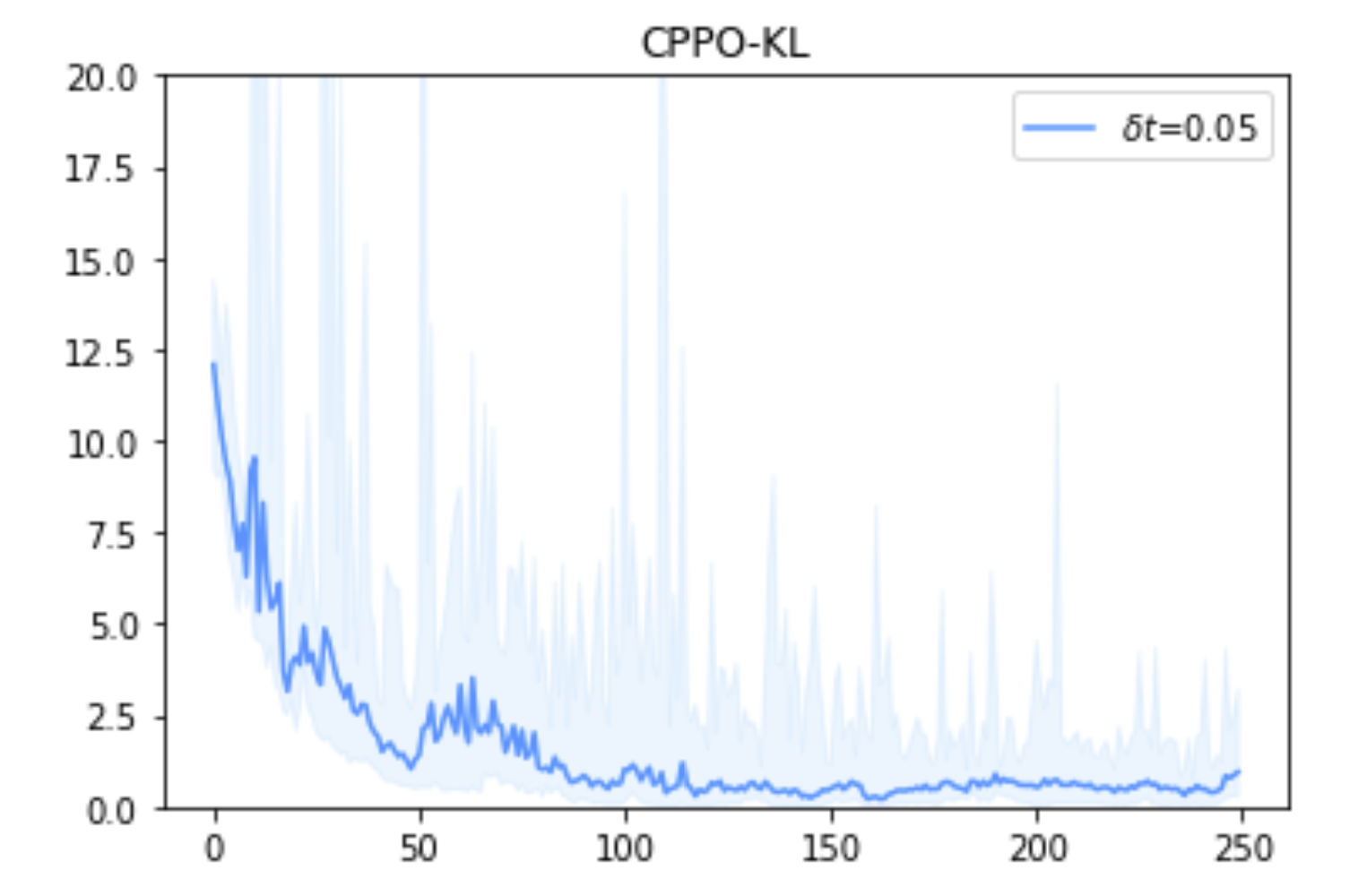}
     \caption{CPPO in KL distance ($\delta_t=0.05$)}
   \end{minipage}
\end{figure}

\begin{figure}[!htb]
   \begin{minipage}{0.48\textwidth}
     \centering
     \includegraphics[width=1\linewidth]{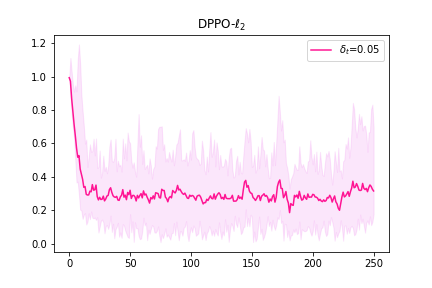}
     \caption{DPPO in $l_2$ distance ($\delta_t=0.05$)}
   \end{minipage}\hfill
   \begin{minipage}{0.48\textwidth}
     \centering
     \includegraphics[width=1\linewidth]{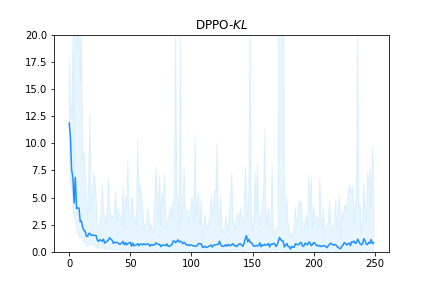}
     \caption{DPPO in KL distance ($\delta_t=0.05$)}
   \end{minipage}
\end{figure}

\begin{figure}[!htb]
   \begin{minipage}{0.48\textwidth}
     \centering
     \includegraphics[width=1\linewidth]{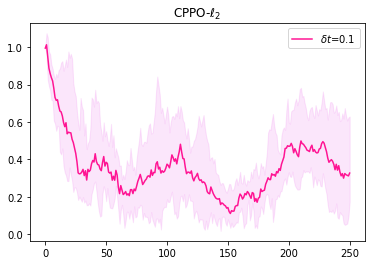}
     \caption{CPPO in $l_2$ distance ($\delta_t=0.1$)}
   \end{minipage}\hfill
   \begin{minipage}{0.48\textwidth}
     \centering
     \includegraphics[width=1\linewidth]{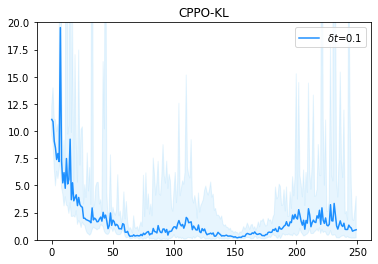}
     \caption{CPPO in KL distance ($\delta_t=0.1$)}
   \end{minipage}
\end{figure}

\begin{figure}[!htb]
   \begin{minipage}{0.48\textwidth}
     \centering
     \includegraphics[width=1\linewidth]{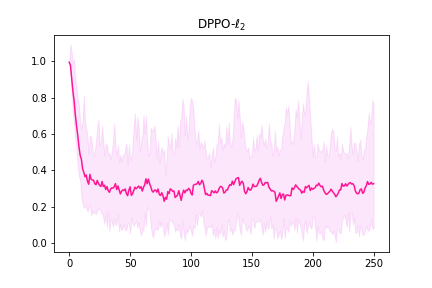}
     \caption{DPPO in $l_2$ distance ($\delta_t=0.1$)}
   \end{minipage}\hfill
   \begin{minipage}{0.48\textwidth}
     \centering
     \includegraphics[width=1\linewidth]{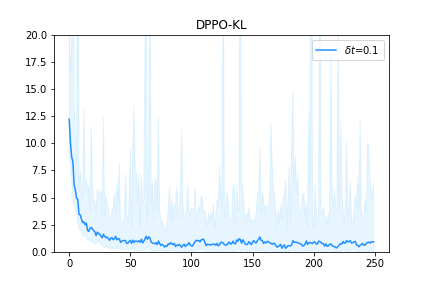}
     \caption{DPPO in KL distance ($\delta_t=0.1$)}
   \end{minipage}
\end{figure}

\end{appendices}

\end{document}